\documentclass[11pt]{article}

\usepackage[T1]{fontenc}
\usepackage[american]{babel}
\usepackage[margin=1in]{geometry}
\usepackage{setspace}
\usepackage[round]{natbib}
\usepackage{mathtools}
\usepackage{amssymb}
\usepackage{amsthm}
\usepackage{booktabs}
\usepackage{graphicx}
\usepackage[hidelinks]{hyperref}
\usepackage{microtype}
\usepackage{subcaption}

\newtheorem{assumption}{Assumption}
\newtheorem{proposition}{Proposition}
\newtheorem{theorem}{Theorem}
\newtheorem{remark}{Remark}

\newcommand{\E}{\mathbb{E}}
\newcommand{\Var}{\mathrm{Var}}
\newcommand{\Cov}{\mathrm{Cov}}
\newcommand{\op}{\xrightarrow{p}}
\newcommand{\od}{\xrightarrow{d}}

\title{Spatially Robust Inference with Predicted and\\ Missing at Random Labels}

\author{
Stephen Salerno \\
Fred Hutch Cancer Center
\and
Zhenke Wu \\
University of Michigan
\and
Tyler H. McCormick\footnote{Correspondence: tylermc@uw.edu} \\
University of Washington
}
\date{}

\begin{document}

\onehalfspacing
\maketitle

\begin{abstract}
    {\it Inference with predicted data} arises when data are sparsely labeled, but model-based predictions are available for all units and used for statistical inference. While recent methods provide valid uncertainty quantification under independent sampling, real-world applications involve missing at random (MAR) labeling and spatial dependence. For inference in this setting, we propose a doubly robust estimator with cross-fit nuisances. We show that cross-fitting induces fold-level correlation that distorts spatial variance estimators, producing unstable or overly conservative confidence intervals. To address this, we propose a jackknife spatial heteroscedasticity and autocorrelation consistent (HAC) variance correction that separates spatial dependence from fold-induced noise. Under standard identification and dependence conditions, the resulting intervals are asymptotically valid. Simulations and benchmark datasets show substantial improvement in finite-sample calibration, particularly under MAR labeling and clustered sampling.
\end{abstract}

\section{Introduction}

{\it Inference with predicted data} arises when data are sparsely labeled but predicted labels are available and used for statistical inference. This is increasingly common in global health surveillance~\citep{fan2024narratives}, where limited audits are combined with full-coverage prediction maps~\citep{bhatt2015malaria,golding2017child,di2017air}, land-use monitoring, which relies on dense remote-sensing predictions with sparse field verification~\citep{bullock2020amazon,sexton2013treecover}, and citizen-science projects that pair scalable automated classifiers with limited expert labels~\citep{willett2013galaxyzoo2}. Here, the scientific goal is typically not individual-level predictions but valid uncertainty quantification for population-level statistics~\citep{salerno2025we}.

Recent works develop methods for drawing inference with predicted labels~\citep[e.g.,][]{wang2020inference,angelopoulos2023ppi, gronsbell2024another,zrnic2024crossppi}, but these approaches are almost uniformly derived under independent and identically distributed (iid) labeling. Real-world settings routinely violate that assumption in two important ways. First, label acquisition is rarely done at random (i.e., outcome labels are rarely missing completely at random, or MCAR). Labeling commonly depends on observed features and geography, inducing a missing at random (MAR) mechanism. Second, observations are often spatially dependent. Both issues matter for inference. Under MAR, naive estimates can be biased when labeling probabilities are correlated with certain predictors. Further, under spatial dependence, classical variance estimators break down even when point estimators remain approximately unbiased. Addressing both sources of error, selection bias from covariate-dependent labeling and dependence in the data-generating process, is essential for valid inference.

Our framing separates the problem into two components: a {\it base prediction} and a subsequent stage of {\it local nuisance correction}. The base prediction model, $f: \mathcal{X}\to\mathcal{Y}$, produces predicted labels $\widehat{Y}_i = f(X_i)$ for every unit based on predictors, $X_i$. In many applications, this model is trained externally and enters the analysis as a deterministic function. Further, $f$ may be arbitrarily complex, and we do not require any knowledge of its training procedure. For inference, the key point is that $\widehat{Y}$ is generally not a selection-robust substitute for $Y$ under MAR. Correcting both prediction bias and locally varying missingness therefore requires estimating two nuisances using our analytic sample: an outcome model, $\hat m(W_i,s_i)$, and a label propensity model, $\hat\pi(W_i,s_i)$, where $W_i=(X_i,\widehat{Y}_i)$ and $s_i$ are spatial coordinates.

Estimating these nuisance functions in practice is challenging because only a small fraction of units are labeled, making cross-fitting necessary to prevent overfitting~\citep{chernozhukov2018dml}. In spatially dependent settings, however, cross-fitting introduces an unintended consequence: units evaluated using the same nuisance estimates inherit shared estimation noise, creating artificial correlations in the resulting estimator. When standard spatial variance estimators are applied directly, this induced correlation can be mistaken for genuine spatial dependence, leading to unstable or miscalibrated confidence intervals.

We propose a jackknife-based variance correction that separates genuine spatial dependence from artificial correlation introduced by cross-fitting. The method removes fold-induced components from our doubly robust (DR) estimator before applying a Conley-style spatial heteroscedasticity and autocorrelation consistent (HAC) covariance estimator and then restores between-fold variability through an ANOVA-style adjustment~\citep{conley1999gmm}. This jackknife-HAC construction leaves the DR point estimator unchanged, modifying only the variance calculation, making the approach modular and easy to adapt to existing workflows.

Existing methods address individual aspects of this problem. Many approaches for inference with predicted data assume independent sampling~\citep[e.g.,][]{salerno2025moment,angelopoulos2023ppi,zrnic2024crossppi,wang2020inference}. Doubly robust estimators accommodate covariate-dependent labeling in classical missing data and semiparametric settings~\citep{bang2005doubly,zeng2010adjustment,butera2022doubly,little2019missing}. Spatial econometrics provides methods for inference under dependence, including Conley-type covariance estimators and random-field central limit theorems~\citep[CLTs;][]{conley1999gmm,bester2011cluster,jenish2009clt,chandrasekhar2023affinity}. Our contribution is to integrate these ideas into a valid approach and to address a specific issue that arises when nuisance functions are cross-fitted on spatially dependent data. We focus on mean estimation for clarity, the same principles extend to other smooth functionals via influence functions. While our approach addresses spatial dependence with missing outcomes, \citet{chen2025unified} provide a complementary framework for independent data, utilizing pattern-stratified Z-estimation to guarantee valid inference with machine learning imputations under general missing-at-random (MAR) mechanisms across both outcomes and covariates.

Note that {\it externally supplied predictions} correspond to settings where the base prediction model is trained on auxiliary data and then treated as fixed in the analytic sample. When predictions are instead {\it learned from the current dataset}, they are generated out-of-fold to avoid reusing labeled outcomes in both prediction and subsequent correction (see Assumption~\ref{ass:pred}). The proposed jackknife-HAC correction targets a different phenomenon: fold-shared training noise arising from cross-fitted local nuisance estimation (Section~\ref{sec:setup}), rather than dependence induced by fitting the base prediction model itself. Empirical illustrations of this distinction appear in our simulation mechanism (Section~\ref{sec:experiments}) and our mechanism ablation study (Appendix~\ref{app:ablate_tune}).

In the following, Section~\ref{sec:setup} formalizes the estimand and identification under MAR labeling and overlap. Section~\ref{sec:estimation} introduces our estimator, describes cross-fitting the nuisances, and presents the jackknife-HAC variance formula. Section~\ref{sec:asymptotics} states high-level asymptotic normality results under dependent-array CLT conditions. Section~\ref{sec:experiments} reports simulated and real data results, along with diagnostic analyses.

\section{Setup and Identification}
\label{sec:setup}

For units $i=1,\ldots,n$, let $Y_i \in \mathbb{R}$ denote our outcome of interest and $R_i\in\{0,1\}$ indicate whether $Y_i$ is observed. Let $X_i$ collect observed predictors and $s_i\in\mathbb{R}^2$ be a location or spatial dependence coordinate. We assume a predicted outcome, $\widehat{Y}_i$, for every unit is either by {\it externally supplied} or {\it learned from the current dataset} (see Section~\ref{sec:estimation}). For brevity, let $W_i=(X_i,\widehat{Y}_i)$. The observed record is
\[
    O_i = (W_i,s_i,R_i,Y_i^{\rm obs}),
\]
where $Y_i^{\rm obs} = Y_i$ if $R_i = 1$ and is missing if $R_i = 0$.

For simplicity, consider the target as the population mean,
\[
    \theta_0 = \E[Y_i],
\]
though this extends to other smooth functionals. We assume $\E[Y_i]=\theta_0\ \forall i$ and $\E[Y_i^2]<\infty$. Define nuisance functions
\[
\begin{aligned}
    m_0(w,s)&=\E[Y_i\mid W_i=w,s_i=s] \text{ and}\\
    \pi_0(w,s)&=\Pr(R_i=1\mid W_i=w,s_i=s).
\end{aligned}
\]

\begin{assumption}[MAR and Overlap]
\label{ass:mar}
$Y_i \perp R_i \mid (W_i,s_i)$ and there exists $\underline{\pi}>0$ s.t. $\underline{\pi} \le \pi_0(W_i,s_i) \le 1-\underline{\pi}$ a.s.
\end{assumption}

Assumption~\ref{ass:mar} is standard in the missing data literature and allows label availability to depend on observed covariates and spatial location. The overlap condition ensures that every covariate-location stratum contributing to the target mean has a nonzero probability of being labeled. Without overlap, valid inference for the global mean requires either changing the estimand (e.g., restricting to regions with sufficient labeling) or imposing additional assumptions.

Based on \citet{scharfstein1999adjusting}, for any measurable pair $(m,\pi)$, we define the doubly robust estimating function
\[
    \psi_i(\theta; m,\pi) = m(W_i,s_i)+\frac{R_i}{\pi(W_i,s_i)}\{Y_i-m(W_i,s_i)\} - \theta.
\]

\begin{proposition}[Doubly Robust Identification]
\label{prop:dr}
Under Assumption~\ref{ass:mar}, $\E[\psi_i(\theta_0;m,\pi_0)] = 0$ for any measurable $m$, and $\E[\psi_i(\theta_0;m_0,\pi)] = 0$ for any measurable $\pi$ that satisfies the same overlap bound.
\end{proposition}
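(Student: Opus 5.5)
The approach is to condition on $(W_i,s_i)$ and use the tower property, treating the two claims separately. For brevity write $Z_i=(W_i,s_i)$, $\pi_0=\pi_0(Z_i)$ and $m_0=m_0(Z_i)$. First I check integrability. Since $m$ and $\pi$ are measurable, $\psi_i$ is a random variable. Because $\pi$ is bounded below by $\underline{\pi}$, the inverse weight $1/\pi$ is bounded by $1/\underline{\pi}$. Together with $\E[Y_i^2]<\infty$, this makes $\E|R_iY_i/\pi|$ finite. For the terms involving $m$, I would note that the statement implicitly requires $\E|m(Z_i)|<\infty$ (otherwise the expectation is undefined), and I would state this as a standing convention. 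Note that $m_0$ is automatically integrable because $Y_i$ is.

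\textbf{First claim (correct propensity).} I would condition on $Z_i$. Since $m(Z_i)$ and $\pi_0(Z_i)$ are $Z_i$-measurable,
\[
\E[\psi_i(\theta_0;m,\pi_0)\mid Z_i] = m(Z_i) + \frac{\E[R_i(Y_i-m(Z_i))\mid Z_i]}{\pi_0(Z_i)} - \theta_0 .
\]
By MAR (Assumption~\ref{ass:mar}), $\E[R_iY_i\mid Z_i]=\E[R_i\mid Z_i]\,\E[Y_i\mid Z_i]=\pi_0 m_0$, and $\E[R_i m(Z_i)\mid Z_i]=\pi_0\, m(Z_i)$. The conditional expectation therefore equals
\[
m + \frac{\pi_0(m_0-m)}{\pi_0} - \theta_0 = m_0(Z_i)-\theta_0 ,
\]
where division by $\pi_0$ is legitimate by overlap. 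Taking expectations and applying the tower property gives $\E[m_0(Z_i)] = \E[Y_i] = \theta_0$, so the expectation is zero.

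\textbf{Second claim (correct outcome model).} The same conditioning gives
\[
\E[\psi_i(\theta_0;m_0,\pi)\mid Z_i] = m_0 + \frac{\pi_0(m_0-m_0)}{\pi} - \theta_0 = m_0-\theta_0 .
\]
This step uses the conditional independence once more, and uses the bound $\pi\ge\underline{\pi}$ only to keep the ratio finite and integrable. Iterating expectations again gives zero.

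The only delicate point is the factorization $\E[R_iY_i\mid Z_i]=\pi_0m_0$. It requires conditional independence together with the integrability of $Y_i$. I would justify it directly: $\E[R_iY_i\mid Z_i]=\E[R_i\,\E(Y_i\mid R_i,Z_i)\mid Z_i]$, and by MAR $\E(Y_i\mid R_i,Z_i)=m_0(Z_i)$. Everything else is routine bookkeeping, including the observation that $R_iY_i=R_iY_i^{\rm obs}$, so $\psi_i$ depends only on observed data.
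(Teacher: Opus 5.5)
Your proposal is correct and follows essentially the same argument as the paper: condition on $(W_i,s_i)$, use MAR to get $\E[R_i(Y_i-m)\mid W_i,s_i]=\pi_0(m_0-m)$, and apply the tower property together with $\E[m_0(W_i,s_i)]=\theta_0$. Your extra bookkeeping consists of two points. First, you note the implicit requirement $\E|m(W_i,s_i)|<\infty$ for the first claim. Second, you make explicit that MAR is also needed in the second claim, a step the paper's proof covers only with ``so''. Both are legitimate refinements rather than a different route.
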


Proof of Proposition~\ref{prop:dr} can be found in Appendix~\ref{sec:dr_identification}. 

The intuition for $\psi_i$ is that the outcome model term $m_i$ is a calibrated function of the predictions for each unit. The correction term $R_i(Y_i- m_i)/\pi_i$ adds back observed residual information with an inverse-probability of labeling weight so that, under MAR, the labeling bias is removed when either the outcome model is correct or the propensity is correct. Dependence correction enters only through the variance, as point identification is a MAR problem, while calibrated uncertainty is a dependence problem.

The bias of iid methods can be written directly. For exposition only, assume a common marginal label rate, $\bar\pi=\E[R_i]$, and consider one such method~\citep{zrnic2024crossppi}
\[
\begin{aligned}
    \theta_{\mathrm{cppi}}(m)&=\E[m(W_i,s_i)] + \frac{1}{\bar\pi}\E\!\left[R_i\{Y_i-m(W_i,s_i)\}\right] \\
    &=\E[Y_i]+\frac{\Cov\!\left(\pi_0(W_i,s_i), Y_i-m(W_i,s_i)\right)}{\bar\pi}    
\end{aligned}
\]
under MAR labeling. The second term captures selection bias induced by nonrandom labeling: unless the prediction residual, $Y_i-m(W_i,s_i)$, is uncorrelated with the labeling probability, $\pi_0(W_i,s_i)$, there is bias. This representation clarifies why such methods perform well in MCAR settings but fail under MAR, particularly when labeling probabilities vary with features or locations associated with prediction error. Our MAR experiments explicitly construct such dependence by letting label propensities depend on both coordinates and observed covariates.

\section{Estimation and Inference}
\label{sec:estimation}

Our doubly robust estimator solves the sample analog of Proposition~\ref{prop:dr}, but the variance is modified to respect the fold-level noise induced by the cross-fit nuisance functions.

\paragraph{Base Prediction Model and Nuisance Functions.} 
We assume a predicted outcome, $\widehat{Y}_i$, for every unit is produced by a base prediction model that may be trained externally and treated as fixed in the analytic sample. We absorb $\widehat{Y}_i$ into the covariate vector, $W_i=(X_i,\widehat{Y}_i)$. The nuisance functions
\[
\begin{aligned}
    m_0(w,s)&=\E[Y_i\mid W_i=w,s_i=s],\\
    \pi_0(w,s)&=\Pr(R_i=1\mid W_i=w,s_i=s),  
\end{aligned}
\]
must be estimated within the analytic sample to correct local miscalibration and local missingness patterns.

\paragraph{Cross-Fit Local Nuisance Estimation.}
We partition the data into $K$ folds, $\mathcal{I}_1,\ldots,\mathcal{I}_K$. For fold $k$, we estimate $\hat m_{-k}$ and $\hat\pi_{-k}$ on the other $K-1$ folds and evaluate them on $k$:
\[
    \hat m_i=\hat m_{-k}(W_i,s_i),\quad \hat\pi_i=\hat\pi_{-k}(W_i,s_i), \quad i\in \mathcal{I}_k.
\]
To reduce train-test leakage under spatial dependence, we use a buffered set based on distances $d_{ij}=\lVert s_i-s_j\rVert$. Let
\[
\begin{aligned}
    r_n&=\text{$q_b$-quantile}\{d_{ij}:1\le i<j\le n\},\\
    \mathcal{T}_k&=\{i\notin\mathcal{I}_k:\min_{j\in\mathcal{I}_k}d_{ij}>r_n\}.
\end{aligned}
\]
If $|\mathcal{T}_k|$ is too small, we use standard jackknife folds. All propensity predictions are clipped to $[\pi_{\min},1-\pi_{\min}]$ for a fixed $\pi_{\min}\in(0,1/2)$ (in experiments, $\pi_{\min}=0.10$).

\paragraph{DR Point Estimator.}
Define the uncentered DR estimate
\[
    \hat\psi_i^\circ = \hat m_i+\frac{R_i}{\hat\pi_i}(Y_i-\hat m_i), \quad \hat\theta=\frac{1}{n}\sum_{i=1}^n \hat\psi_i^\circ.
\]

\paragraph{Naive Spatial HAC Instability.}
Cross-fitting prevents overfitting the labeled subset, but it has a side effect under spatial dependence: all units in a fold, $k$, share the same fitted nuisance functions, $(\hat m_{-k},\hat\pi_{-k})$, so their estimates $\hat\psi_i^\circ$ share {\it fold-level noise}. A stylized decomposition is
\[
    \hat\psi_i^\circ=\theta_0+\phi_{0i}+a_{k(i)}+\varepsilon_{n,i},
\]
where $\phi_{0i}=\psi_i(\theta_0;m_0,\pi_0)$ is the centered score function, $a_{k(i)}$ is a fold-level random effect induced by nuisance estimation in settings of spatial dependence, and $\varepsilon_{n,i}$ is a remainder. For $i$ and $j$ in the same fold, $\Cov(\hat\psi_i^\circ,\hat\psi_j^\circ)$ includes $\Var(a_{k(i)})$ regardless of distance. Since a nontrivial fraction of near-neighbor pairs lie in the same fold, a distance-weighted Conley HAC estimator can misattribute this shared nuisance noise to genuine short-range spatial dependence, inflating standard errors in finite samples.

\paragraph{Jackknife-HAC Variance.}
Let $\bar\psi_k=n_k^{-1}\sum_{i\in\mathcal{I}_k}\hat\psi_i^\circ$, $n_k=|\mathcal{I}_k|$, and define fold-centered scores $\tilde\psi_i=\hat\psi_i^\circ-\bar\psi_{k(i)}$. Let $\kappa(\cdot)$ be a triangular kernel and set the HAC bandwidth $h_n$ by a distance quantile:
\[
\begin{aligned}
    \kappa(u)&=\max(1-u,0),\\
    h_n&=\text{$h_q$-quantile}\{d_{ij}:1\le i<j\le n\}.
\end{aligned}
\]
In experiments we choose $h_n$ (and the buffer radius $r_n$) by distance quantiles, $h_q$, for simplicity and reproducibility. In the asymptotic statement, we treat these as deterministic sequences that satisfy standard spatial HAC and buffered-splitting regularity conditions. Define weights $w_{ij}=\kappa(d_{ij}/h_n)$. The within-fold covariance is a Conley HAC quadratic form applied to $\tilde\psi$:
\[
    \hat V_{\mathrm{within}} =\frac{1}{n^2}\sum_{i=1}^n\sum_{j=1}^n w_{ij}\tilde\psi_i\tilde\psi_j.
\]
We keep only the off-diagonal contribution and add the between-fold term computed from the fold means:
\[
    \hat V_{\mathrm{off}} =\hat V_{\mathrm{within}}-\hat V_{\mathrm{diag}}, \quad \hat V_{\mathrm{diag}}=\frac{1}{n^2}\sum_{i=1}^n \tilde\psi_i^2,
\]
\[
    \hat V_{\mathrm{between}} =\frac{K}{K-1}\sum_{k=1}^K\left(\frac{n_k}{n}\right)^2(\bar\psi_k-\hat\theta)^2.
\]
The jackknife-HAC variance estimator is
\[
\hat V_{\mathrm{JK}}
=\hat V_{\mathrm{off}}+\hat V_{\mathrm{between}}.
\]
Intuitively, fold centering removes fold-shared components such as $a_{k(i)}$ from the spatial covariance calculation, while the between-fold term reintroduces fold-mean variation only through the fold averages. The diagonal subtraction prevents counting idiosyncratic (diagonal) variation twice. In our implementation, we further let $\hat V_{\mathrm{JK}}^{+}=\max(\hat V_{\mathrm{JK}},\epsilon)$ with small $\epsilon>0$ for numerical stability. A two-sided $1-\alpha$ confidence interval is then given by
\[
    \hat\theta\pm c_{1-\alpha/2}\sqrt{\hat V_{\mathrm{JK}}^{+}},
\]
where $c_{1-\alpha/2}=z_{1-\alpha/2}$ in the spatial branch.

\begin{proposition}[Fold-Shared Noise Is Removed from Within Covariance]
\label{prop:fold_demean}
Fix a fold partition $\{\mathcal{I}_k\}_{k=1}^K$. Suppose the computed score functions admit a decomposition
\[
    \hat\psi_i^\circ = u_i + a_{k(i)},
\]
where $a_{k(i)}$ is constant within fold $k(i)$. Let $\tilde\psi_i=\hat\psi_i^\circ-\bar\psi_{k(i)}$ be the centered fold score and $\tilde u_i=u_i-\bar u_{k(i)}$ the fold-centered $u$-sequence. Then $\tilde\psi_i=\tilde u_i$ for all $i$, so any within-fold covariance applied to $\tilde\psi$, including the Conley HAC quadratic form, is invariant to adding fold-constant terms $\{a_k\}$. Consequently, fold-shared noise affects $\hat V_{\mathrm{JK}}$ only through $\hat V_{\mathrm{between}}$, computed from the fold means.
\end{proposition}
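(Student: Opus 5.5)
The argument is purely algebraic: fix the partition and compute fold means directly, with no probabilistic input. First, for fold $k$, I would average the decomposition $\hat\psi_i^\circ = u_i + a_k$ over $i\in\mathcal{I}_k$. Since $a_k$ is constant on $\mathcal{I}_k$, this gives
\[
\bar\psi_k = \frac{1}{n_k}\sum_{i\in\mathcal{I}_k}(u_i + a_k) = \bar u_k + a_k .
\]
Subtracting this from $\hat\psi_i^\circ$ for $i\in\mathcal{I}_k$ then yields
\[
\tilde\psi_i = u_i + a_k - \bar u_k - a_k = \tilde u_i .
\]
This identity holds for every $i$, because each unit belongs to exactly one fold and $k(i)$ is well defined.

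Second, I would deduce invariance. The quantities $\hat V_{\mathrm{within}}$, $\hat V_{\mathrm{diag}}$ and hence $\hat V_{\mathrm{off}}$ are functions of the vector $(\tilde\psi_1,\ldots,\tilde\psi_n)$ and of the weights $w_{ij}$ alone. The weights $w_{ij}=\kappa(d_{ij}/h_n)$ depend only on locations and the bandwidth, not on the scores. Substituting $\tilde\psi_i=\tilde u_i$ therefore shows that
\[
\hat V_{\mathrm{off}} = \frac{1}{n^2}\sum_{i\neq j} w_{ij}\tilde u_i\tilde u_j ,
\]
which holds whenever $w_{ii}=\kappa(0)=1$ so that the diagonal cancels exactly. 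This expression is the same for any choice of $\{a_k\}$. The same argument covers any within-fold covariance functional that depends on the scores only through the fold-centered vector, not just the Conley form.

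Third, I would locate where the $a_k$ enter $\hat V_{\mathrm{JK}}$. Since $\hat V_{\mathrm{JK}}=\hat V_{\mathrm{off}}+\hat V_{\mathrm{between}}$ and the first summand is $a$-free, any dependence on $\{a_k\}$ must come through $\hat V_{\mathrm{between}}$. That term is a function of $\bar\psi_k=\bar u_k+a_k$ and of
\[
\hat\theta=\sum_k (n_k/n)\bar\psi_k = \bar u + \sum_k (n_k/n)a_k ,
\]
so it is computed only from the fold means. This establishes the ``consequently'' clause.

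There is no real obstacle here. The only step needing care is the bookkeeping that $\hat V_{\mathrm{diag}}$ is built from the same centered $\tilde\psi$, so it is also invariant, and that $w_{ii}=1$ under the triangular kernel. I would also remark that $\hat V_{\mathrm{between}}$ genuinely does depend on the $a_k$, through $\bar\psi_k-\hat\theta = (\bar u_k-\bar u)+(a_k-\bar a)$ with $\bar a=\sum_k (n_k/n)a_k$. This dependence is intended: it reflects that fold-level noise contributes to the variance of $\hat\theta$ only through the fold averages.
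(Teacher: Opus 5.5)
Your proposal is correct and is the paper's own argument: average $\hat\psi_i^\circ=u_i+a_k$ over the fold to get $\bar\psi_k=\bar u_k+a_k$, and the $a_k$ then cancel in $\tilde\psi_i$. You also spell out two steps the paper leaves implicit, namely that $\hat V_{\mathrm{off}}$ depends only on $\tilde\psi$ and on score-free weights, and that the $a_k$ re-enter only through $\bar\psi_k-\hat\theta=(\bar u_k-\bar u)+(a_k-\bar a)$ in $\hat V_{\mathrm{between}}$.
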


\begin{proof}
Write $\bar\psi_k=n_k^{-1}\sum_{i\in\mathcal{I}_k}\hat\psi_i^\circ=\bar u_k+a_k$. Then for any $i\in\mathcal{I}_k$,
\[
\tilde\psi_i
=\hat\psi_i^\circ-\bar\psi_k
=(u_i+a_k)-(\bar u_k+a_k)
=u_i-\bar u_k
=\tilde u_i.
\]
\end{proof}

\paragraph{Optional Moran Gate.}
In implementation, we optionally gate the within-fold covariate by a labeled residual Moran test: compute Moran's $I$~\citep{moran1950} on $\{Y_i-\hat m_i:R_i=1\}$. If the test does not reject a spatial signal, we replace the within-fold HAC covariance by an iid covariance matrix and use a Student's $t_{K-1}$ critical value for $c_{1-\alpha/2}$. This gate is a pragmatic stability heuristic. Our analytic results treat $\hat V_{\mathrm{JK}}^{+}$ abstractly through a high-level consistency assumption.

\paragraph{Algorithm Summary (One Replicate).}
Our procedure is as follows: (i) obtain the base predicted outcomes, $\widehat{Y}_i$; (ii) choose folds and buffer, $q_b$, then cross-fit local nuisance models, $\hat m(W,s)$ and $\hat\pi(W,s)$, on the analytic sample; (iii) compute $\hat\theta=n^{-1}\sum_i \hat\psi_i^\circ$; (iv) compute $\hat V_{\mathrm{JK}}^{+}$; and (v) report
\[
    \hat\theta\pm c\sqrt{\hat V_{\mathrm{JK}}^{+}}.
\]

\paragraph{Asymptotic Fold Scaling.}
For the high-level asymptotic statement in Section~\ref{sec:asymptotics} we allow the number of folds $K=K_n$ to increase with $n$ (Assumption~\ref{ass:folds}), which makes the ANOVA term stable in large samples. In finite samples we use small $K$ for feasibility with limited labels.

Two estimand conventions should be distinguished in finite-sample reporting. Our analytic results target the superpopulation mean, $\theta_0=\E[Y_i]$, which treats $(W_i,s_i,Y_i)$ as random draws from a dependent array. Here, our main results use superpopulation-style coverage. In synthetic designs, this is for a known data generation process mean, $\mu$, but in observed benchmark resampling, this is a fixed full-pool mean used across repeated draws. Finite-target coverage against each realized sample mean is reported only as a comparison in the Appendix, where it is typically more attenuated. This convention follows the finite-versus-superpopulation distinction in modern causal inference~\citep{jin2024tailored} and is conceptually parallel to the fixed-versus-random spatial structure distinction emphasized in spatial confounding~\citep{gilbert2025consistency}.

\begin{remark}[Buffered Cross-Fitting]
Buffered cross-fitting is important in dependent settings. Standard random cross-fitting in these settings can leak local dependence from between folds. Buffering reduces this leakage by excluding observations within a distance quantile around each held-out fold. In our experiments we use small buffers so that nuisance estimation remains stable while reducing nearest-neighbor dependence leakage. This is a pragmatic compromise between asymptotic orthogonality arguments and finite-sample feasibility when the labeled fraction is small.   
\end{remark}

\begin{remark}[Propensity Clipping]
$\pi_{\min}$, is tied to the overlap assumption and the empirical design. $\pi_{\min}=0.10$ reflects an explicit two-sided overlap floor in the MAR labeling mechanism and ensures bounded influence weights. In settings where audit budgets imply weaker overlap, the same framework can be used with a different target estimand, such as a trimmed overlap population. We keep the global-mean estimand for our main results and treat no-overlap designs as supplementary because $\theta_0$ is not point identifiable without additional structure.  
\end{remark}

\section{Asymptotic Validity Under Dependence}
\label{sec:asymptotics}

Let $\phi_{0i}=\psi_i(\theta_0;m_0,\pi_0)$ and $Z_{n,i}=\phi_{0i}-\E[\phi_{0i}]$. Our asymptotic interpretation follows increasing-domain spatial asymptotics: as $n\to\infty$, the spatial domain expands, local dependence neighborhoods remain bounded in physical scale while long-range covariance accumulation remains weak enough for HAC stabilization.

Let $\mathcal{A}_n(i)\subset\{1,\ldots,n\}$ be affinity sets with $i\in\mathcal{A}_n(i)$, instantiated as distance neighborhoods $\mathcal{A}_n(i)=\{j:d_{ij}\le h_n\}$, and define
\[
    \Omega_n=\sum_{i=1}^n\sum_{j\in\mathcal{A}_n(i)}\Cov(Z_{n,i},Z_{n,j}).
\]
We take $h_n$ to be the same bandwidth sequence used by the spatial covariance in Section~\ref{sec:estimation}.

\begin{assumption}[Covariance CLT Conditions]
\label{ass:clt}
The within-affinity covariance accumulation, cross-affinity negligibility, and outside-affinity tail controls in \citet{chandrasekhar2023affinity} hold for $\{Z_{n,i}\}_{i=1}^n$, and
\[
    \Omega_n/n \to \sigma^2 \in (0,\infty).
\]
\end{assumption}

\begin{assumption}[Fold Regularity]
\label{ass:folds}
Let $K=K_n$ be the number of cross-fitting folds with partition $\{\mathcal{I}_k\}_{k=1}^{K_n}$. As $n\to\infty$, $K_n\to\infty$, $\min_k|\mathcal{I}_k|\to\infty$, and $\max_k|\mathcal{I}_k|/n\to 0$. Fold assignments are fixed a priori or generated by auxiliary randomness independent of $\{O_i\}_{i=1}^n$ conditional on $\{(W_i,s_i)\}_{i=1}^n$, and the same folds are used in estimating the nuisance functions and in estimating $\hat V_{\mathrm{JK}}$.
\end{assumption}

\begin{assumption}[Predicted Outcome Provenance]
\label{ass:pred}
Either (i) predicted outcomes, $\widehat{Y}_i$, are produced by a model trained on auxiliary data independent of the current analysis sample $\{O_i\}_{i=1}^n$ (so conditional on that auxiliary training data, the prediction rule is fixed), or (ii) if the predictions are generated from the current dataset, the prediction generation stage is cross-fitted (with optional buffering) so that for each labeled unit $i$, $\widehat{Y}_i$ is out-of-fold with respect to its own observed outcome $Y_i$. In all cases, any additional calibration step used to form $\hat m(W,s)$ is also cross-fitted so that $\hat m_i$ is out-of-fold for its labeled units.
\end{assumption}

\begin{assumption}[Cross-fit Nuisance Quality]
\label{ass:nuisance}
The cross-fitted nuisance functions satisfy
\[
    \frac{1}{n}\sum_{i=1}^n\Big(\psi_i(\theta_0;\hat m,\hat\pi)-\psi_i(\theta_0;m_0,\pi_0)\Big)=o_p(n^{-1/2}),
\]
and $\hat\pi_i\in[\underline{\pi},1-\underline{\pi}]$ with probability approaching one.
\end{assumption}

\begin{assumption}[Jackknife-HAC Consistency]
\label{ass:hac}
$\hat V_{\mathrm{JK}}^{+}\ge 0$ and $n\hat V_{\mathrm{JK}}^{+} \op \sigma^2$.
\end{assumption}

Assumption~\ref{ass:folds} allows the number of folds $K_n$ to grow while each fold remains large, which stabilizes the between-fold term in $\hat V_{\mathrm{JK}}$. Assumption~\ref{ass:pred} separates external- versus within-sample prediction regimes; when predicted outcomes are generated from current labels, cross-fitting is needed to avoid own-observation reuse. Assumption~\ref{ass:nuisance} is a high-level orthogonality remainder condition standard in debiased machine learning with cross-fitting \citep{chernozhukov2018dml}.  In iid settings, cross-fitting plus bounded moments and the DML product-rate condition
\[
    \Big(\frac{1}{n}\sum_{i=1}^n(\hat m_i-m_{0i})^2\Big)^\frac{1}{2}\Big(\frac{1}{n}\sum_{i=1}^n(\hat\pi_i-\pi_{0i})^2\Big)^\frac{1}{2}=o_p(n^{-\frac{1}{2}})
\]
is sufficient for Assumption~\ref{ass:nuisance} \citep{chernozhukov2018dml}. Under dependence, we impose Assumption~\ref{ass:nuisance} directly rather than deriving it from primitive mixing conditions. Importantly, the identification-level double robustness in Proposition~\ref{prop:dr} does not by itself imply Assumption~\ref{ass:nuisance}: asymptotic normality requires nuisance errors to be sufficiently small in the sense of the remainder bound above. Assumption~\ref{ass:hac} concerns the jackknife-HAC variance estimator; sufficient conditions combine standard kernel, bandwidth, and covariance-decay conditions for Conley HAC \citep{conley1999gmm,bester2011cluster} with fold regularity and the requirement that centering the folds removes only asymptotically negligible fold-shared nuisance noise.

\begin{theorem}[Asymptotic Normality and Valid CIs]
\label{thm:main}
Under Assumptions~\ref{ass:mar}--\ref{ass:hac},
\[
    \sqrt{n}(\hat\theta-\theta_0)\od \mathcal{N}(0,\sigma^2),
\]
and therefore
\[
    \Pr\left(\theta_0\in\left[\hat\theta-z_{1-\frac{\alpha}{2}}\sqrt{\hat V_{\mathrm{JK}}^{+}}, \hat\theta+z_{1-\frac{\alpha}{2}}\sqrt{\hat V_{\mathrm{JK}}^{+}}\right]\right)\to 1-\alpha.
\]
\end{theorem}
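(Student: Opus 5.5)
The argument splits into three steps: reduce $\hat\theta$ to an oracle score average, apply a dependent-array CLT to that average, and combine with variance consistency via Slutsky.

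\textbf{Step 1: reduction to the oracle average.} Since $\psi_i(\theta;m,\pi)$ is linear in $\theta$ with coefficient $-1$, we have
\[
\hat\theta-\theta_0=\frac{1}{n}\sum_{i=1}^n\psi_i(\theta_0;\hat m,\hat\pi),
\]
because $\hat\psi_i^\circ-\theta_0=\psi_i(\theta_0;\hat m,\hat\pi)$ by definition. Adding and subtracting the oracle scores $\phi_{0i}=\psi_i(\theta_0;m_0,\pi_0)$ gives
\[
\sqrt n(\hat\theta-\theta_0)=\frac{1}{\sqrt n}\sum_{i=1}^n\phi_{0i}+\sqrt n\cdot\frac1n\sum_{i=1}^n\Big(\psi_i(\theta_0;\hat m,\hat\pi)-\phi_{0i}\Big).
\]
By Assumption~\ref{ass:nuisance} the second term is $o_p(1)$. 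Assumption~\ref{ass:pred} guarantees that the cross-fitted objects entering this remainder are legitimately out-of-fold, so Assumption~\ref{ass:nuisance} is the relevant condition. Next, by Proposition~\ref{prop:dr} applied with $m=m_0$ and $\pi=\pi_0$, and using Assumption~\ref{ass:mar}, we get $\E[\phi_{0i}]=0$. Hence $\phi_{0i}=Z_{n,i}$ exactly, and it suffices to study $n^{-1/2}\sum_i Z_{n,i}$.

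\textbf{Step 2: CLT under dependence.} Under Assumption~\ref{ass:clt}, the affinity-set CLT of \citet{chandrasekhar2023affinity} gives
\[
\Omega_n^{-1/2}\sum_{i=1}^n Z_{n,i}\od\mathcal N(0,1).
\]
The overlap bound and $\E[Y_i^2]<\infty$ ensure $\phi_{0i}$ has finite second moments, since $|R_i/\pi_0|\le 1/\underline\pi$, which supplies the moment conditions that CLT requires. Since $\Omega_n/n\to\sigma^2\in(0,\infty)$, Slutsky's lemma yields $n^{-1/2}\sum_i Z_{n,i}\od\mathcal N(0,\sigma^2)$. Combining with Step 1 proves the first display of Theorem~\ref{thm:main}.

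\textbf{Step 3: coverage.} Assumption~\ref{ass:hac} gives $n\hat V_{\mathrm{JK}}^{+}\op\sigma^2>0$, so
\[
T_n=\frac{\hat\theta-\theta_0}{\sqrt{\hat V_{\mathrm{JK}}^{+}}}=\frac{\sqrt n(\hat\theta-\theta_0)}{\sqrt{n\hat V_{\mathrm{JK}}^{+}}}\od\mathcal N(0,1)
\]
by Slutsky and the continuous mapping theorem. Because the normal limit is continuous, the event $\{|T_n|\le z_{1-\alpha/2}\}$, which is exactly the coverage event, has probability tending to $1-\alpha$.

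Assumption~\ref{ass:folds} and Proposition~\ref{prop:fold_demean} enter only through Assumption~\ref{ass:hac}, which is taken as given here. The main obstacle is therefore Step 2: checking that $\{Z_{n,i}\}$ meets the exact hypotheses of the cited affinity CLT, including its normalization by $\Omega_n$ rather than $\Var(\sum_i Z_{n,i})$ and the requirement that the outside-affinity covariance be negligible. I would handle this by showing that the tail controls in Assumption~\ref{ass:clt} imply $\Var(\sum_i Z_{n,i})/\Omega_n\to1$, which lets the two normalizations be interchanged.
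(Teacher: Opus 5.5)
Your proposal is correct and follows essentially the same route as the paper's proof. Both use the oracle-plus-remainder decomposition, obtain $\E[\phi_{0i}]=0$ from Proposition~\ref{prop:dr}, control the remainder via Assumption~\ref{ass:nuisance}, invoke the CLT from Assumption~\ref{ass:clt}, and studentize with Slutsky using Assumption~\ref{ass:hac}.

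The ``obstacle'' you flag at the end is not one, for two reasons. First, Assumption~\ref{ass:clt} imposes the cited CLT's hypotheses directly; finite second moments alone would not supply them. Second, that CLT is already normalized by $\Omega_n$, so your rescaling via $\Omega_n/n\to\sigma^2$ completes Step~2 without comparing $\Var(\sum_i Z_{n,i})$ with $\Omega_n$.
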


\begin{proof}
Write
\[
    \sqrt{n}(\hat\theta-\theta_0) =\frac{1}{\sqrt{n}}\sum_{i=1}^n \phi_{0i} +\sqrt{n} R_n,
\]
where
\[
    R_n=\frac{1}{n}\sum_{i=1}^n\Big(\psi_i(\theta_0;\hat m,\hat\pi)-\psi_i(\theta_0;m_0,\pi_0)\Big).
\]
By Assumption~\ref{ass:nuisance}, $\sqrt{n}R_n=o_p(1)$. By Assumption~\ref{ass:clt}, the centered leading term obeys
\[
    \frac{1}{\sqrt{n}}\sum_{i=1}^n Z_{n,i}\od \mathcal{N}(0,\sigma^2).
\]
Because
\[
    \frac{1}{\sqrt{n}}\sum_{i=1}^n\phi_{0i}=\frac{1}{\sqrt{n}}\sum_{i=1}^n Z_{n,i}+\frac{1}{\sqrt{n}}\sum_{i=1}^n \E[\phi_{0i}],
\]
and $\E[\phi_{0i}]=0$ for each $i$ by Proposition~\ref{prop:dr}, we obtain
\[
    \frac{1}{\sqrt{n}}\sum_{i=1}^n\phi_{0i}\od \mathcal{N}(0,\sigma^2).
\]
Slutsky then yields the first claim. The interval claim follows from Assumption~\ref{ass:hac} and another Slutsky step.
\end{proof}

\begin{remark}[Scope of Theorem~\ref{thm:main}]
The theorem is conditional on Assumptions~\ref{ass:clt}--\ref{ass:hac}. In particular, we do not claim that finite samples can directly verify every covariance-array and nuisance-rate condition, especially when coordinates are proxy embeddings. The next section should therefore be read as assumption-aligned empirical support, rather than proof that each benchmark dataset satisfies all asymptotic regularity conditions.
Further, the data-dependent Moran pre-test described in Section~\ref{sec:estimation} selects between variance/critical-value branches and is not covered by Theorem~\ref{thm:main}; the theorem treats $\hat V_{\mathrm{JK}}^{+}$ as the variance.
\end{remark}

For the complete proof, see Appendix~\ref{app:proof}.

\section{Comparator Methods and Empirical Results}
\label{sec:experiments}

This section addresses one question: under MAR or MCAR designs with sufficient overlap, does dependence-aware inference with predicted labels improve coverage relative to iid baselines, and how does that pattern change as dependence and sampling concentration increase?

We compared our approach (referred to as Spatial DR-JK-HAC) to the following methods: Cross-PPI~\citep{zrnic2024crossppi}, PPI++~\citep{angelopoulos2023ppi++}, and Bootstrap-PPI~\citep{efron2026boostrap}. These serve as example baselines, as they assume independent labeling and iid sampling. Spatial DR-JK-HAC is our doubly robust point estimator paired with a jackknife-HAC variance designed for MAR labeling and spatial dependence. Additional DR baselines (DR-iid and spatial DR-HAC) are reported in the Appendix as robustness comparisons; they follow the same DR point estimator but pair it with an iid variance or a direct Conley-style HAC variance, respectively.

\subsection{Simulation Mechanism and Results}
We generate a synthetic spatial population on an $n_{\mathrm{side}}\times n_{\mathrm{side}}$ grid (here $n_{\mathrm{side}}=250$, so $N=62{,}500$ population units). Each population replicate draws smoothed Gaussian random fields on this grid. The dependence level $\sigma$ is the smoothing radius (in grid cells); larger $\sigma$ induces longer-range spatial correlation. To keep the marginal scale comparable across $\sigma$, we center and rescale each $\sigma$-smoothed field to have approximately zero mean and unit standard deviation.

Each unit has observed features $W=(X,U_{\mathrm{obs}},\mathrm{coords})$, where $X$ is a weakly dependent field (fixed smoothing radius $2$) and $\mathrm{coords}$ are normalized coordinates. Outcomes follow
\[
Y = \mu + \beta X + \lambda U_{\mathrm{obs}} + \lambda U_{\mathrm{unobs}} + \varepsilon,
\]
with $\mu=0$, $\beta=0.8$, $\lambda=1.0$, and $\varepsilon\sim\mathcal{N}(0,0.6^2)$. $U_{\mathrm{unobs}}$ is a second $\sigma$-smoothed field not included in $W$ to create spatial dependence that cannot be removed by conditioning on $W$, so predictions trained on $W$ are intentionally imperfect at high dependence.

Predictions, $\hat Y$, are based on an auxiliary training set \citep[see Remark~5.1 of][]{xu2025unifiedframeworksemiparametricallyefficient}. Throughout, we use an {\it auxiliary-split} construction: we split the population into an auxiliary training pool (35\%) and an analysis pool (65\%), fit a gradient-boosted tree predictor of $Y$ from standardized $W$ on the auxiliary pool, and evaluate $\hat Y$ only on the disjoint analysis pool. We then draw repeated analysis samples of size $n=600$ from the analysis pool under either iid sampling or soft-block sampling. In soft-block sampling, a small ``core'' of size $0.05n$ is taken as nearest neighbors of a random anchor point, and the remainder is filled iid from the remaining pool, making dependence more visible to the sampling distribution, but without making global superpopulation inference impossible.

Given a drawn sample, we take a fixed label budget of 20\% either MCAR or via an adversarial MAR mechanism that concentrates labels away from high-uncertainty regions while respecting $\pi\in[0.10,0.90]$ (see Appendix~\ref{sec:propensity_dgm}). To keep the simulation focused on dependence-driven calibration, we use the true $\pi$ when forming the DR estimator. 

Figure~\ref{fig:synth_cov_box} reports the 90\% coverage probabilities for $\sigma\in\{0,40,80,120\}$ (100 population replicates per $\sigma$, 200 sample draws per population). Under iid sampling and MCAR labeling, the baseline methods have near nominal coverage, even as $\sigma$ grows (e.g., at $\sigma=120$, Cross-PPI and PPI++ cover at $0.90$ and $0.90$). Under iid sampling and MAR, these methods under-cover (at $\sigma=120$, Cross-PPI: $0.46$, PPI++: $0.57$, Bootstrap-PPI: $0.33$), reflecting MAR-induced bias. Under soft-block sampling, these methods under-cover as dependence grows, even in MCAR (at $\sigma=120$, Cross-PPI: $0.81$, PPI++: $0.80$, Bootstrap-PPI: $0.65$), because spatial dependence is visible in the sampled data. Spatial DR-JK-HAC preserves near-nominal coverage across all four cells (at $\sigma=120$, block-MCAR: $0.91$, block-MAR: $0.90$).

Appendix~\ref{app:synthwidth} summarizes tradeoffs in CI widths. At $\sigma=120$ under soft-block sampling, Spatial DR-JK-HAC intervals are $1.36\times$ wider than Cross-PPI in MCAR ($0.39$ vs $0.28$ mean length) and $1.57\times$ wider in MAR ($0.43$ vs $0.27$).

\begin{figure}[t]
\centering
\includegraphics[width=\textwidth]{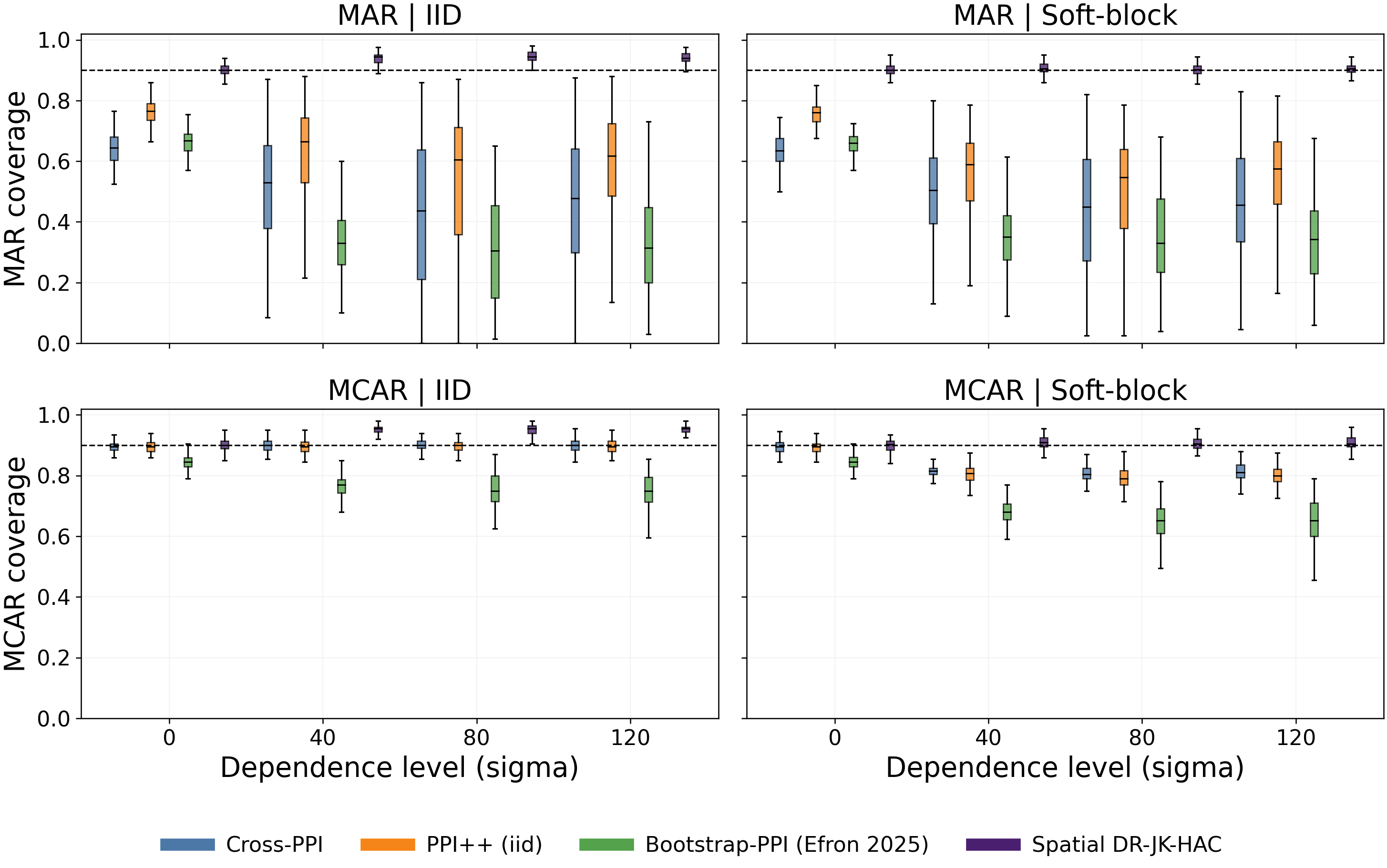}
\caption{Synthetic spatial-field simulation (superpopulation target): boxplots of empirical 90\% coverage versus dependence level $\sigma$. Panels split by missingness arm (MCAR vs MAR) and sampling regime (iid vs soft-block). For each $\sigma$, each box summarizes the distribution across 100 population replicates of empirical coverage computed from 200 repeated sample draws of size $n=600$ from a single population replicate. Methods shown are Cross-PPI, PPI++, Bootstrap-PPI, and Spatial DR-JK-HAC. The dashed line marks nominal coverage. Under MAR, baseline models under-cover even under iid sampling; under MCAR, these models under-cover when soft-block sampling makes spatial dependence visible. Spatial DR-JK-HAC stays near nominal in all four cells, at the cost of modestly wider intervals (see Appendix~\ref{app:synthwidth}).}
\label{fig:synth_cov_box}
\end{figure}

\subsection{Observed Data Design and Results}

We evaluated five observed benchmark datasets, three from \citet{zrnic2024crossppi}: (i) {\it forest disturbance}, a geolocated Amazon deforestation dataset with remote-sensing covariates~\citep{bullock2020amazon,sexton2013treecover}; (ii) {\it galaxy zoo morphology}, based on Galaxy Zoo 2 labels with associated SDSS imaging metadata~\citep{willett2013galaxyzoo2,york2000sdss}; and (iii) {\it census income}, constructed from the American Community Survey via Folktables~\citep{ding2021folktables}, and two harder spatial benchmarks: (iv) {\it malaria} (Colombian municipalities; outcome is a transformed malaria burden variable), and (v) {\it Health+} (a tract-level life-expectancy outcome with socioeconomic covariates). In each dataset, we repeatedly draw $n=320$ observations under iid and soft-block sampling and mask labels under MCAR or MAR at $20\%$ budget. Coverage uses the fixed full-pool benchmark mean as the repeated-sampling target (the observed-data analog of a superpopulation-style target in this design). All observed-data results use the same auxiliary-split prediction construction described above, with 40 replicates per dataset-arm-sampling cell for $K=5$ methods.


Table~\ref{tab:real-baux-main-coverage} summarizes coverage for the observed data under this auxiliary-split prediction design, averaged over iid and soft-block sampling. Averaged across datasets in MAR, coverage is $0.665$ (Cross-PPI), $0.726$ (PPI++), $0.571$ (Bootstrap-PPI), $0.810$ (DR-iid), $0.816$ (Spatial DR-HAC), and $0.874$ (Spatial DR-JK-HAC, $K=5$). Forest and Malaria show the largest MAR gains for Spatial DR-JK-HAC ($K=5$, $t$) relative to Spatial DR-HAC. In MCAR, DR-based methods are generally near or above nominal, while Bootstrap-PPI remains low in several datasets.

CI-width differences are directionally consistent with the synthetic design. In MAR settings, Spatial DR-JK-HAC (K=5, $t$) has about $1.38\times$ the mean width of Spatial DR-HAC when averaging widths across datasets. Appendix~\ref{app:baux_tables} reports K=10 fold-sensitivity and width tables: K=10 ($t$) reduces width inflation relative to K=5 while retaining higher coverage than Spatial DR-HAC in most datasets, and K=10 ($z$) is close to Spatial DR-HAC in both coverage and width.

\begin{table}[t]
\centering
\caption{Coverage for observed data auxiliary-split prediction design, averaged over IID and soft-block sampling.}
\label{tab:real-baux-main-coverage}
\setlength{\tabcolsep}{4.0pt}
\small
\resizebox{\linewidth}{!}{%
\begin{tabular}{lrrrrrr}
\toprule
Dataset & Cross-PPI & PPI++ (IID) & Bootstrap-PPI & DR-IID (CF-m) & Spatial DR-HAC & Spatial DR-JK-HAC (K=5, \(t\)) \\
\midrule
\multicolumn{7}{l}{\textit{Panel: MAR}} \\
Forest & 0.613 & 0.619 & 0.575 & 0.825 & 0.825 & 0.925 \\
Galaxies & 0.775 & 0.875 & 0.825 & 0.825 & 0.825 & 0.875 \\
Census income & 0.631 & 0.831 & 0.613 & 0.725 & 0.744 & 0.806 \\
Malaria & 0.850 & 0.900 & 0.725 & 0.750 & 0.750 & 0.850 \\
Health+ & 0.456 & 0.406 & 0.119 & 0.925 & 0.938 & 0.912 \\
\midrule
\multicolumn{7}{l}{\textit{Panel: MCAR}} \\
Forest & 0.894 & 0.856 & 0.794 & 0.900 & 0.894 & 0.969 \\
Galaxies & 0.912 & 0.919 & 0.825 & 0.906 & 0.906 & 0.944 \\
Census income & 0.863 & 0.844 & 0.713 & 0.850 & 0.875 & 0.906 \\
Malaria & 0.850 & 0.838 & 0.738 & 0.906 & 0.906 & 0.944 \\
Health+ & 0.912 & 0.906 & 0.263 & 0.931 & 0.944 & 0.900 \\
\bottomrule
\end{tabular}
}
\vspace{2pt}
\begin{minipage}{0.98\linewidth}
\footnotesize Each entry averages over both sampling regimes for the given dataset and arm. All entries use nominal 90\% intervals.
\end{minipage}
\end{table}

We also ran targeted diagnostic checks to isolate mechanisms and rule out obvious analysis artifacts. First, we validate coordinate proxies by residual Moran and nearest-neighbor diagnostics rather than narrative caveats, and we report that proxy quality is strongest for forest and weaker for galaxies and census (Appendix~\ref{app:proxy_overlap}). Second, we report overlap diagnostics directly, including design and estimated propensity quantiles, clipping rates, and inverse-propensity effective sample size (Appendix~\ref{app:proxy_overlap}). Third, we provide a mechanism attribution check that isolates the roles of DR correction, dependence-aware variance estimation, and the jackknife-HAC correction in the hardest synthetic cell (Appendix~\ref{app:ablate_tune}). Replicate-level interval visualizations that make MAR centering failures visible are shown in Appendix~\ref{app:ci_examples}.

\section{Scope, Novelty, and Limits}

Under Assumption~\ref{ass:mar}, the point estimator in this paper is deliberately classical: it is the standard doubly robust (DR) estimator for missing outcomes~\citep{bang2005doubly,little2019missing}. The novelty is in the {\it covariance correction} required once the DR nuisance functions are estimated locally in the analytic sample with cross-fitting. Existing methods are built for independent designs and typically do not address MAR labeling plus spatial dependence in one end-to-end pipeline~\citep{angelopoulos2023ppi++,zrnic2024crossppi}. Spatial HAC estimators are well developed~\citep{conley1999gmm,bester2011cluster}, but applying them directly to cross-fitted DR scores can be unstable because fold members share the same fitted nuisance functions.

This paper isolates and addresses these issues. Cross-fitting is essential with small label budgets, but it induces fold-shared training noise in $\hat m$ and $\hat\pi$, hence, in the estimated score functions, $\hat\psi_i^\circ$. When nearby units share a fold, this noise can masquerade as short-range spatial correlation and inflate a Conley HAC variance estimate. The jackknife-HAC variance in Section~\ref{sec:estimation} attenuates this fold-shared component by centering within folds and recombining within- and between-fold variation. This correction leaves the identification score unchanged and targets the variance estimation.

Our assumptions are intended to match the logic used in Conley-style spatial econometrics. Researchers typically do not require global independence; they require that dependence be predominantly local and that long-range covariance accumulation be weak enough for CLTs and HAC estimators to stabilize~\citep{conley1999gmm,bester2011cluster,jenish2009clt}. The overlap requirement is the standard positivity condition from missing data analysis, interpreted here as a design requirement for audit allocation: every covariate-location stratum that contributes to the target mean must have nonzero probability of manual labeling.

The limitations are also clear. First, if overlap for the global mean is badly violated, valid confidence intervals are generally impossible without changing the estimand or adding structure; overlap-restricted estimands, adaptive trimming, or partial identification become appropriate alternatives~\citep{crump2009overlap}. Second, when dependence coordinates are proxy embeddings rather than true spatial coordinates, both HAC weighting and Moran-style diagnostics can be misaligned with the actual dependence structure. Third, the jackknife-HAC correction is designed for the practical regime of small labeled fractions and cross-fitted nuisance estimation; a complete theory for its finite-sample behavior under spatial dependence is beyond the current scope and is a natural target for future work.

Finally, this framework is modular: the same DR score can be paired with other dependence-appropriate covariance structures beyond spatial HAC. Appendix~\ref{app:modular} illustrates this with a jackknife-corrected two-way clustered variance estimator for overlapping dependence structures~\citep{cameron2011robust,chandrasekhar2023affinity}.

\section{Conclusions} 

We propose a method for spatially robust inference with predicted and missing at random labels. With cross-fitted nuisance estimation in the analytic sample, the standard DR estimator inherits fold-shared training noise, and naive Conley HAC variance estimation can misinterpret this noise as spatial dependence. We proposed a jackknife-HAC variance estimator with standard DR estimator. Under standard identification, dependent-array CLT conditions, and high-level nuisance and variance consistency assumptions, the resulting intervals are asymptotically valid.

Empirically, the jackknife-HAC correction can be evaluated across label budgets, fold choices, and coordinate quality. Theoretically, a natural direction is deriving the conditions under which this jackknife-HAC approach is asymptotically equivalent to HAC, while improving finite-sample robustness when labels are limited. More broadly, this approach is modular and can be paired with other dependence-matched covariance structures (e.g., Newey-West for time series~\citep{newey1987simple} or multiway clustering for overlapping dependence~\citep{cameron2011robust,chandrasekhar2023affinity}). These results suggest a broader framework for dependence-robust inference when machine learning predictions supplement sparse labels.

\newpage

\bibliography{spatial_post_prediction}

\clearpage

\appendix

\section*{Supplementary Material}

This appendix reports supplementary proofs, tables, and figures that support the main text. 

\section{Proof of Doubly Robust Identification}
\label{sec:dr_identification}

\begin{proof}
Condition on $(W_i,s_i)$ and write $m_i=m(W_i,s_i)$, $m_{0i}=m_0(W_i,s_i)$, $\pi_{0i}=\pi_0(W_i,s_i)$. For the first claim,
\begin{align*}
\E\!\left[\frac{R_i}{\pi_{0i}}(Y_i-m_i) \middle| W_i,s_i\right]
&=\frac{1}{\pi_{0i}}\E\!\left[R_i(Y_i-m_i)\mid W_i,s_i\right]\\
&=\E[Y_i-m_i\mid W_i,s_i],
\end{align*}
where MAR is used in the second equality. Hence
\[
\E\!\left[
m_i+\frac{R_i}{\pi_{0i}}(Y_i-m_i)
 \middle| 
W_i,s_i
\right]
=\E[Y_i\mid W_i,s_i],
\]
and averaging gives $\E[\psi_i(\theta_0;m,\pi_0)]=0$. For the second claim, $\E[Y_i-m_{0i}\mid W_i,s_i]=0$, so
\[
\E\!\left[\frac{R_i}{\pi(W_i,s_i)}(Y_i-m_{0i}) \middle| W_i,s_i\right]=0
\]
for any admissible $\pi$, and therefore $\E[\psi_i(\theta_0;m_0,\pi)]=0$.
\end{proof}

\section{Details on the True Labeling Propensity in Our Experiments}
\label{sec:propensity_dgm}

\paragraph{Synthetic MAR Propensity.}
In the synthetic full-scale experiments, labels are generated under
\[
R_i \mid (W_i,s_i,\widehat Y_i)\sim \mathrm{Bernoulli}\!\left(\pi_i^{\mathrm{MAR}}\right),
\]
with
\[
\pi_i^{\mathrm{MAR}}
=
\mathrm{clip}\!\left(
\operatorname{expit}(\alpha + S_i), 0.10, 0.90
\right),
\]
where $\alpha$ is chosen (by bisection) so that
\[
\frac{1}{n}\sum_{i=1}^n \pi_i^{\mathrm{MAR}} = 0.20.
\]
Define $z_i^{(f)}$, $z_i^{(c)}$, and $z_i^{(s)}$ to be the normalized features, $(W_i,s_i)$, coordinates, $s_i$, and predictions, $\widehat Y_i$, respectively. In the adversarial profile with strength $1.5$,
\[
\begin{aligned}
S_i={}&1.425 z^{(f)}_{i1}
+1.125 z^{(f)}_{i2}
+0.525 z^{(f)}_{i3}
+0.825 z^{(c)}_{i1}
-0.825 z^{(c)}_{i2} \\
&+0.600 z^{(f)}_{i1}z^{(c)}_{i1}
+0.525 z^{(c)}_{i1}z^{(c)}_{i2}
+0.450 z^{(f)}_{i2}z^{(c)}_{i2}
+1.350 z^{(s)}_{i}.
\end{aligned}
\]

\paragraph{How this is used in the experiments.}
The expression above is used at the \emph{label-assignment stage} of each synthetic replicate to generate the MAR mask  $R_i $. Specifically, after drawing a sample, we compute $\pi_i^{\mathrm{MAR}}$, draw $R_i\sim\mathrm{Bernoulli}(\pi_i^{\mathrm{MAR}})$, and enforce the label-budget bounds used by the simulation protocol. In our main simulation, we set
\[
\hat\pi_i=\mathrm{clip}\!\left(\pi_i^{\mathrm{MAR}}, 0.10, 0.90\right)
\]
in MAR arms; thus this is both the data-generating propensity and the oracle weighting propensity in those runs. (In MCAR arms, the oracle branch uses the constant target label rate.)

\section{Auxiliary-Split Prediction Design: Coverage and Width Tables}
\label{app:baux_tables}
These tables complement Table~\ref{tab:real-baux-main-coverage} in the main text by reporting how dependence-aware DR intervals change with the number of cross-fitting folds and with the critical-value choice used in the jackknife-HAC correction. Table~\ref{tab:real-baux-kimpact-coverage} reports observed-data coverage for Spatial DR-HAC and Spatial DR-JK-HAC at $K\in\{5,10\}$, and Table~\ref{tab:real-baux-kimpact-width} reports the corresponding mean interval lengths. Because the $K=10$ runs use fewer replicates than $K=5$, these fold-sensitivity comparisons are best read directionally.
\begin{table}[h]
\centering
\caption{Observed-data auxiliary-split prediction design: superpopulation-style coverage, fold-count sensitivity for dependence-aware methods.}
\label{tab:real-baux-kimpact-coverage}
\setlength{\tabcolsep}{4.0pt}
\scriptsize
\resizebox{\linewidth}{!}{%
\begin{tabular}{lrrrr}
\toprule
Dataset & Spatial DR-HAC & Spatial DR-JK-HAC (K=5, \(t\)) & Spatial DR-JK-HAC (K=10, \(t\)) & Spatial DR-JK-HAC (K=10, \(z\)) \\
\midrule
\multicolumn{5}{l}{\textit{Panel: MAR}} \\
Forest & 0.825 & 0.925 & 0.838 & 0.819 \\
Galaxies & 0.825 & 0.875 & 0.838 & 0.825 \\
Census income & 0.744 & 0.806 & 0.787 & 0.769 \\
Malaria & 0.750 & 0.850 & 0.819 & 0.794 \\
Health+ & 0.938 & 0.912 & 0.931 & 0.881 \\
\midrule
\multicolumn{5}{l}{\textit{Panel: MCAR}} \\
Forest & 0.894 & 0.969 & 0.944 & 0.925 \\
Galaxies & 0.906 & 0.944 & 0.925 & 0.900 \\
Census income & 0.875 & 0.906 & 0.900 & 0.869 \\
Malaria & 0.906 & 0.944 & 0.919 & 0.900 \\
Health+ & 0.944 & 0.900 & 0.912 & 0.881 \\
\bottomrule
\end{tabular}
}
\vspace{2pt}
\begin{minipage}{0.98\linewidth}
\footnotesize K=5 runs use 40 replicates per dataset-arm-sampling cell; K=10 runs use 20 replicates per cell.
\end{minipage}
\end{table}

\begin{table}[t]
\centering
\caption{Observed-data auxiliary-split prediction design: mean confidence-interval width by dataset and arm (nominal 90\%).}
\label{tab:real-baux-kimpact-width}
\setlength{\tabcolsep}{4.0pt}
\scriptsize
\resizebox{\linewidth}{!}{%
\begin{tabular}{lrrrr}
\toprule
Dataset & Spatial DR-HAC & Spatial DR-JK-HAC (K=5, \(t\)) & Spatial DR-JK-HAC (K=10, \(t\)) & Spatial DR-JK-HAC (K=10, \(z\)) \\
\midrule
\multicolumn{5}{l}{\textit{Panel: MAR}} \\
Forest & 0.160 & 0.232 & 0.184 & 0.165 \\
Galaxies & 0.214 & 0.314 & 0.255 & 0.229 \\
Census income & 30273.567 & 41671.111 & 32924.730 & 29543.385 \\
Malaria & 0.823 & 1.140 & 0.931 & 0.847 \\
Health+ & 11.337 & 15.016 & 11.098 & 9.984 \\
\midrule
\multicolumn{5}{l}{\textit{Panel: MCAR}} \\
Forest & 0.161 & 0.231 & 0.190 & 0.171 \\
Galaxies & 0.207 & 0.297 & 0.242 & 0.217 \\
Census income & 34798.853 & 47585.294 & 37588.786 & 33787.528 \\
Malaria & 1.016 & 1.404 & 1.157 & 1.047 \\
Health+ & 8.161 & 9.461 & 8.123 & 7.362 \\
\bottomrule
\end{tabular}
}
\vspace{2pt}
\begin{minipage}{0.98\linewidth}
\footnotesize Widths are averaged over IID and soft-block sampling within each dataset-arm cell.
\end{minipage}
\end{table}

\section{Synthetic Interval Width Diagnostics}
\label{app:synthwidth}
Figure~\ref{fig:synthwidth_box} reports per-population distributions of confidence-interval length for the synthetic simulation in Figure~\ref{fig:synth_cov_box}. We keep this figure in the appendix because width is secondary to coverage in the main-text narrative, but it remains useful for visualizing conservativeness tradeoffs across covariances. Figure~\ref{fig:synthcov_allmethods} reports the same synthetic coverage display with the full method menu, including DR-iid and spatial DR-HAC, which are omitted from the main-text coverage figure because they are either under-covering (DR-iid under MAR) or more conservative than spatial DR-JK-HAC under the main specification.

\begin{figure}[!ht]
\centering
\includegraphics[width=\linewidth]{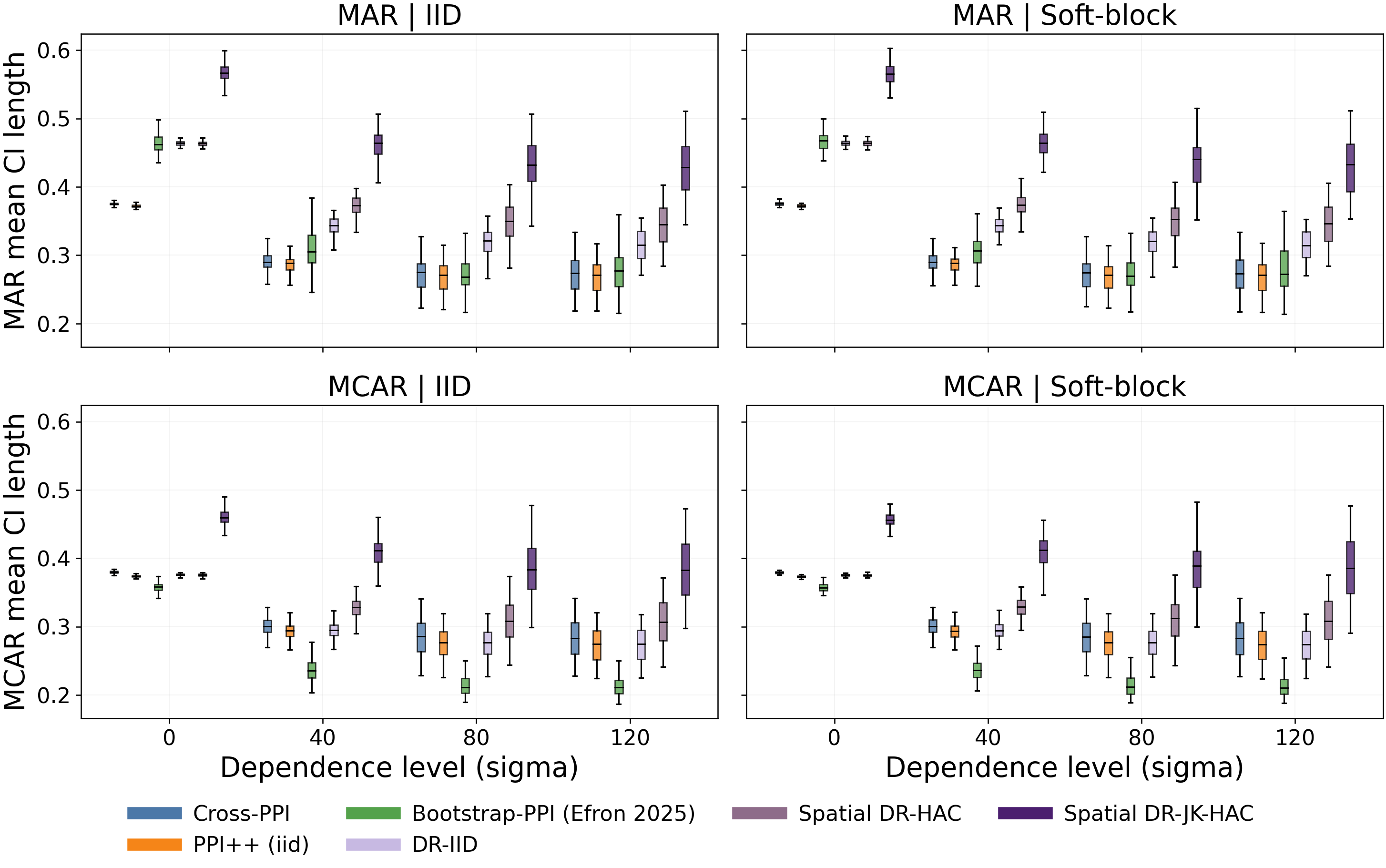}
\caption{Synthetic simulation (superpopulation target): confidence-interval length boxplots by missingness arm and sampling regime. The $x$-axis is $\sigma\in\{0,40,80,120\}$. Each box summarizes mean interval length over 200 repeated sample draws across 100 replicate populations. Purple shades denote DR-based methods (DR-iid, spatial DR-HAC, spatial DR-JK-HAC). The figure is self-contained: the dependence-aware methods inflate width relative to iid post-prediction baselines, and spatial DR-JK-HAC narrows intervals relative to spatial DR-HAC while retaining dependence-aware coverage.}
\label{fig:synthwidth_box}
\end{figure}

\begin{figure}[!ht]
\centering
\includegraphics[width=\linewidth]{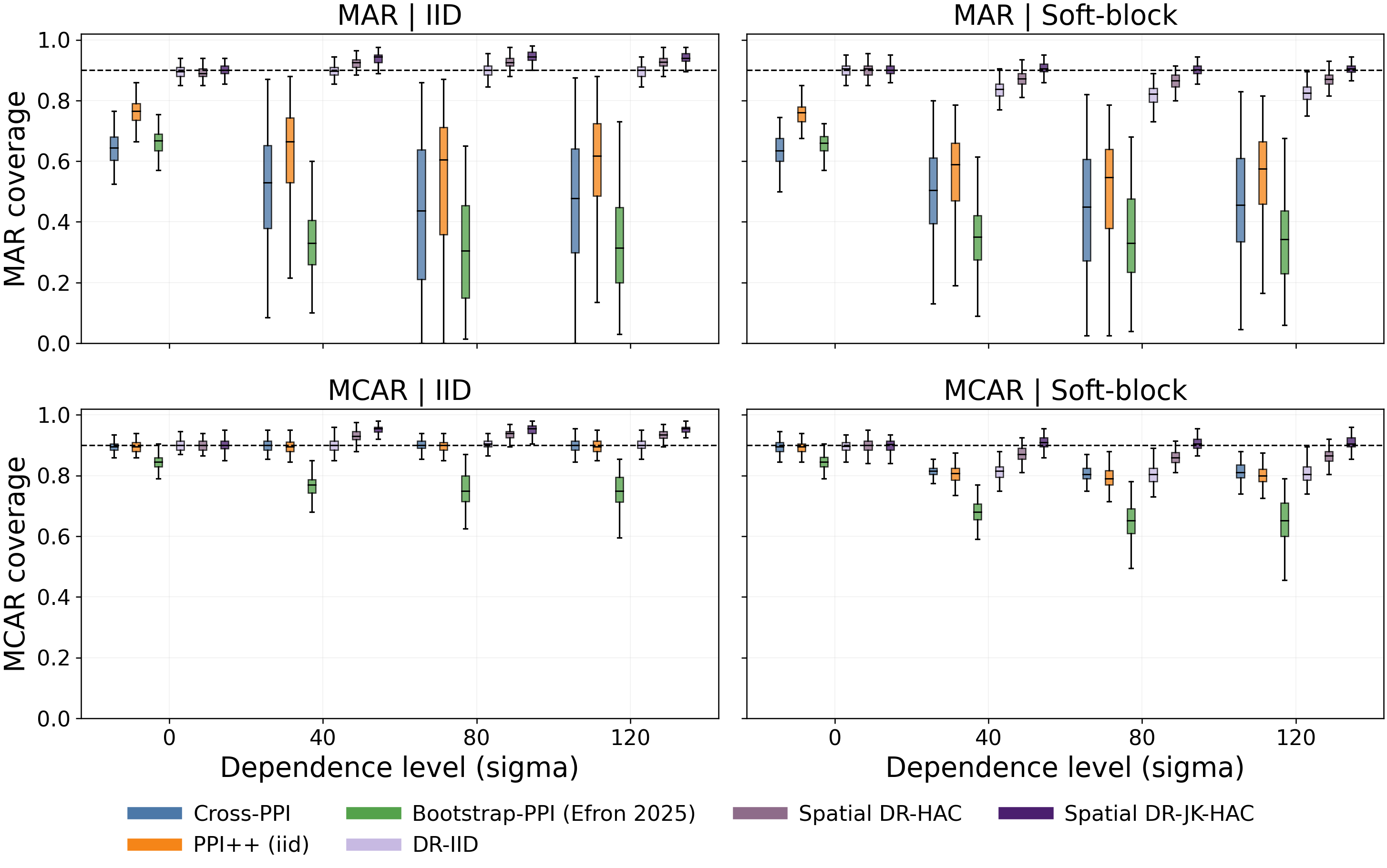}
\caption{Synthetic simulation: coverage boxplots for the full method menu. Methods include Cross-PPI, PPI++ (iid), Bootstrap-PPI (Efron 2025), DR-iid, spatial DR-HAC, and spatial DR-JK-HAC. The figure is self-contained: DR-iid and spatial DR-HAC improve over iid post-prediction baselines under dependence and MAR but do not match spatial DR-JK-HAC in the most challenging soft-block MAR cells, motivating the jackknife-HAC correction in the main text.}
\label{fig:synthcov_allmethods}
\end{figure}

\section{Additional Interval Visualizations}
\label{app:ci_examples}
Figures~\ref{fig:marcis} and \ref{fig:mcarcis} provide a by-eye check that complements the boxplot coverage summaries in Figure~\ref{fig:synth_cov_box}. Each panel shows 50 replicate-level 90\% intervals under {\it soft-block} sampling and 20\% labeling. The synthetic panel uses $\sigma=120$; the observed-data panels use the same auxiliary-split prediction construction and repeated-sampling protocol as the main empirical section. To keep the reference comparable across replicate-specific targets, we plot each interval {\it centered by its replicate target}, so the dashed vertical line at $0$ corresponds to the target mean in every panel. Blue intervals are Cross-PPI and purple intervals are Spatial DR-JK-HAC.

\begin{figure}[!ht]
\centering
\includegraphics[width=\linewidth]{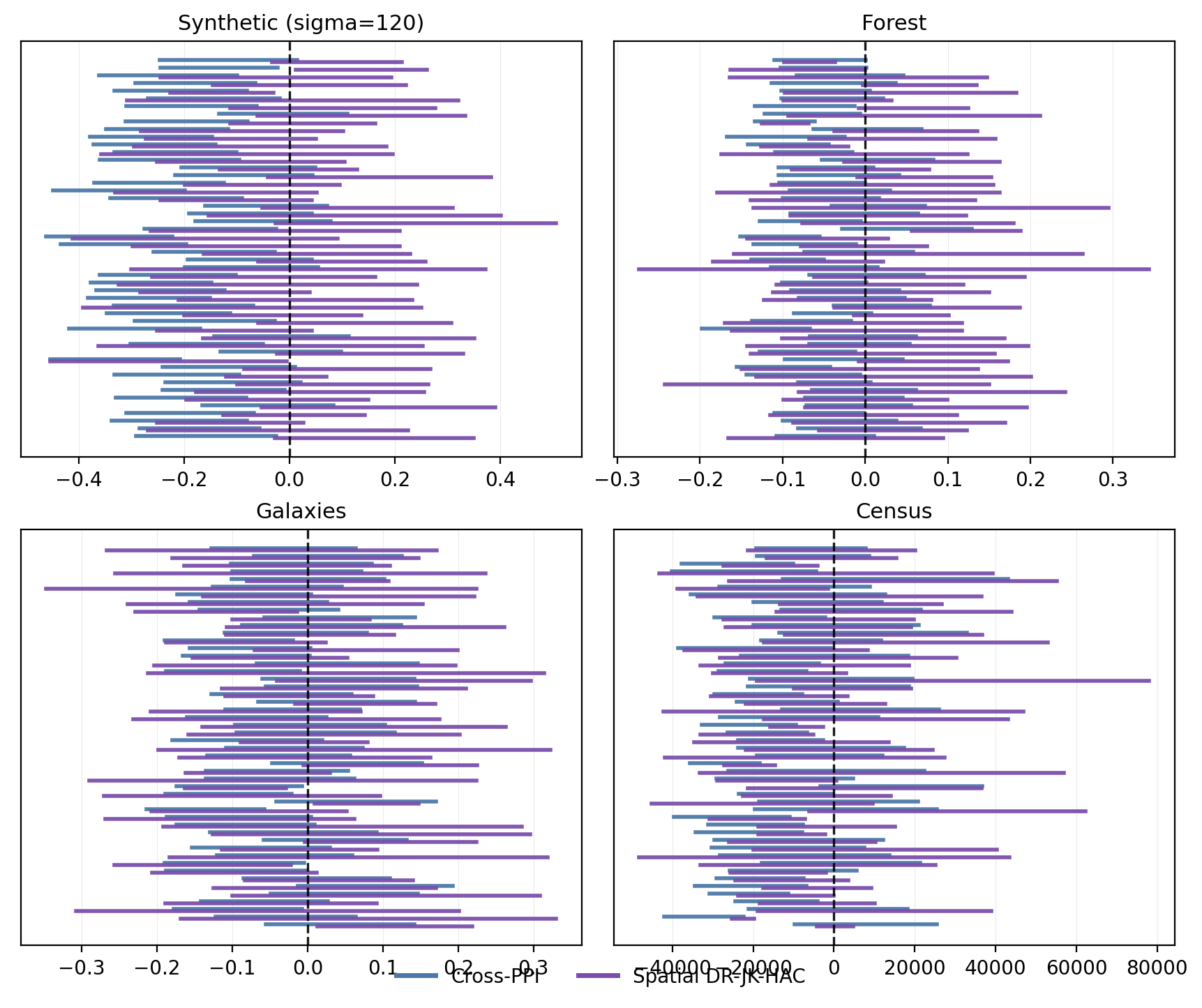}
\caption{Replicate-level interval visualization under MAR and soft-block sampling. Each interval is centered by subtracting its replicate target mean, so the dashed vertical line at $0$ is the target in every panel. Cross-PPI (blue) shows systematic mis-centering in multiple panels, consistent with the MAR under-coverage seen in Figure~\ref{fig:synth_cov_box}. Spatial DR-JK-HAC (purple) remains centered and attains near-nominal coverage.}
\label{fig:marcis}
\end{figure}

\begin{figure}[!ht]
\centering
\includegraphics[width=\linewidth]{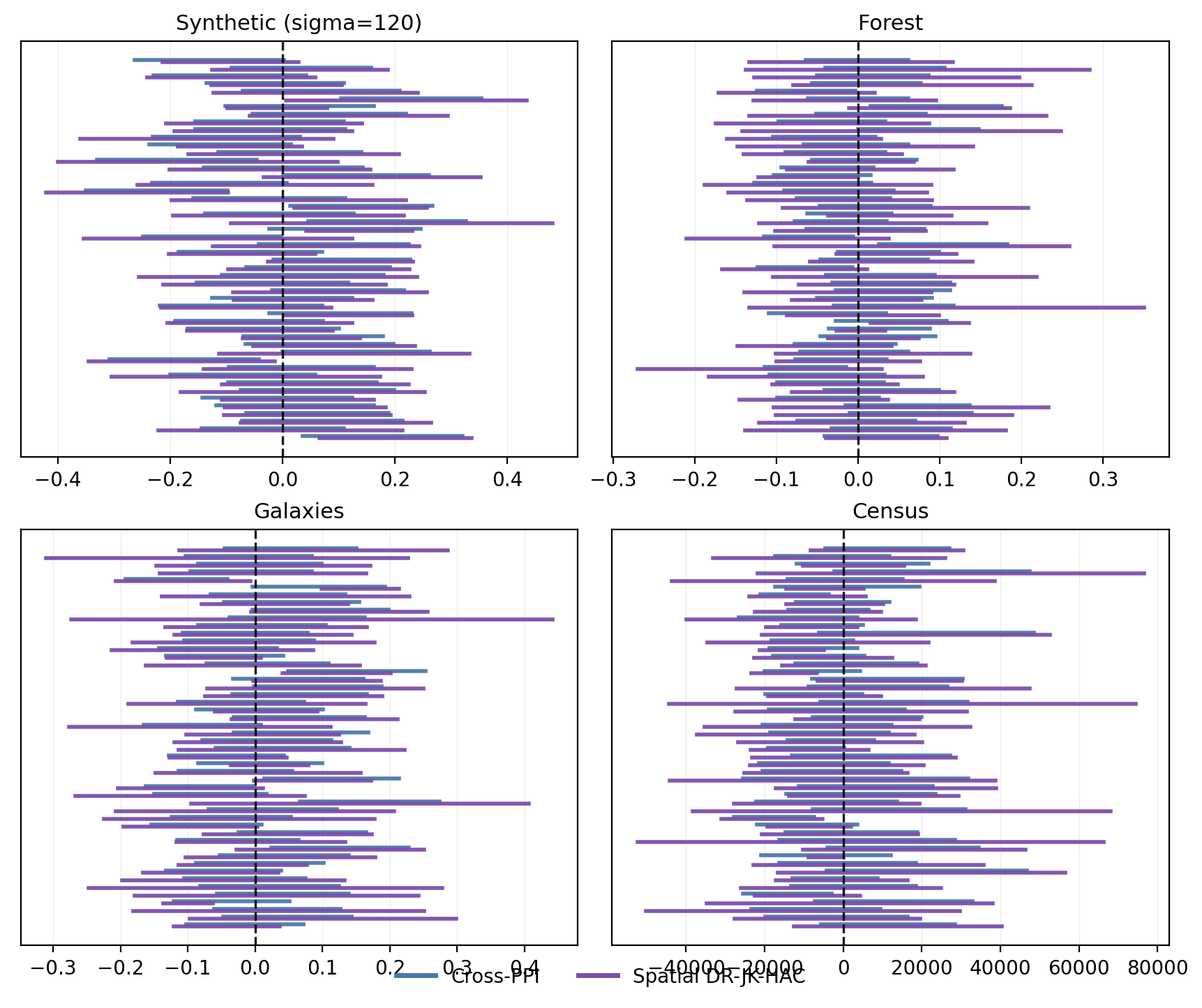}
\caption{Replicate-level interval visualization under MCAR and soft-block sampling, with the same centering convention as Figure~\ref{fig:marcis}. Under MCAR, iid baselines are often closer to nominal than under MAR, but Cross-PPI still under-covers in the dependence-rich synthetic panel as dependence becomes visible under soft-block sampling. Spatial DR-JK-HAC remains near nominal across panels at the cost of wider intervals.}
\label{fig:mcarcis}
\end{figure}

\section{Proxy Coordinates and Overlap Diagnostics}
\label{app:proxy_overlap}
This section addresses two empirical identification checks that are central to the main assumptions. First, for datasets that rely on proxy coordinates, we check whether residual dependence is actually visible in the coordinates used by inference. Second, we report overlap diagnostics directly rather than assuming them from design.

Table~\ref{tab:proxydiag} reports residual Moran's $I$ and nearest-neighbor residual correlation with permutation p-values. Forest and malaria show strong residual spatial signal under the analysis coordinates. Health+ shows significant global Moran's $I$ with weaker nearest-neighbor signal. Galaxies and census are weak under their proxy embeddings, which is exactly why we treat those settings as harder coordinate-proxy cases in the text.

\begin{table}[!ht]
\centering
\caption{Residual dependence diagnostics on analysis coordinates (5,000 permutation draws).}
\label{tab:proxydiag}
\small
\begin{tabular}{lcccc}
\toprule
Dataset & Moran's $I$ & Moran p-value & NN residual corr. & NN p-value \\
\midrule
Forest & 0.146 & 0.003 & 0.134 & 0.003 \\
Galaxies & -0.010 & 0.355 & 0.022 & 0.326 \\
Census income & 0.015 & 0.110 & 0.040 & 0.060 \\
Malaria & 0.293 & 0.003 & 0.352 & 0.003 \\
Health+ & 0.051 & 0.003 & 0.026 & 0.252 \\
\bottomrule
\end{tabular}
\end{table}

\begin{figure}[!ht]
\centering
\includegraphics[width=\linewidth]{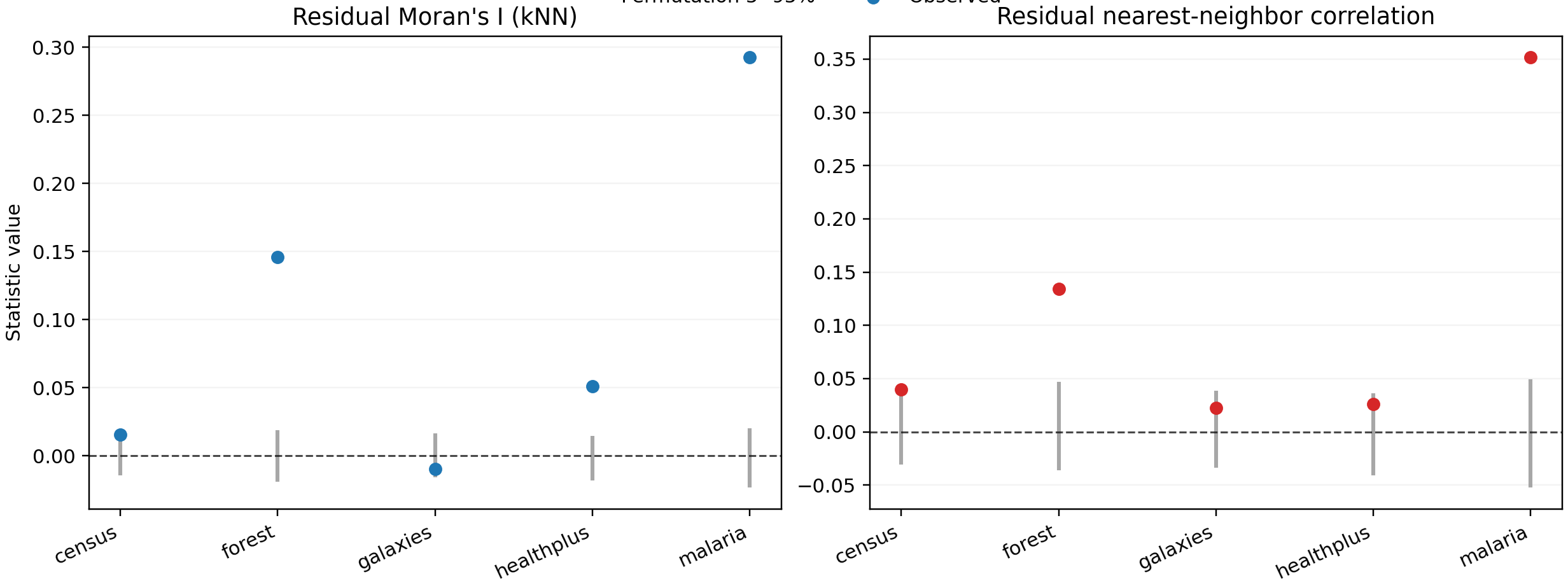}
\caption{Residual dependence diagnostics by dataset under the exact coordinates used in inference. Left panel reports Moran's $I$ with permutation intervals and right panel reports nearest-neighbor residual correlation with the same permutation reference. The figure is self-contained: forest and malaria are clearly dependence-rich under observed coordinates, while galaxies and census are weak under proxy embeddings.}
\label{fig:proxydiag}
\end{figure}

\begin{figure}[!ht]
\centering
\includegraphics[width=\linewidth]{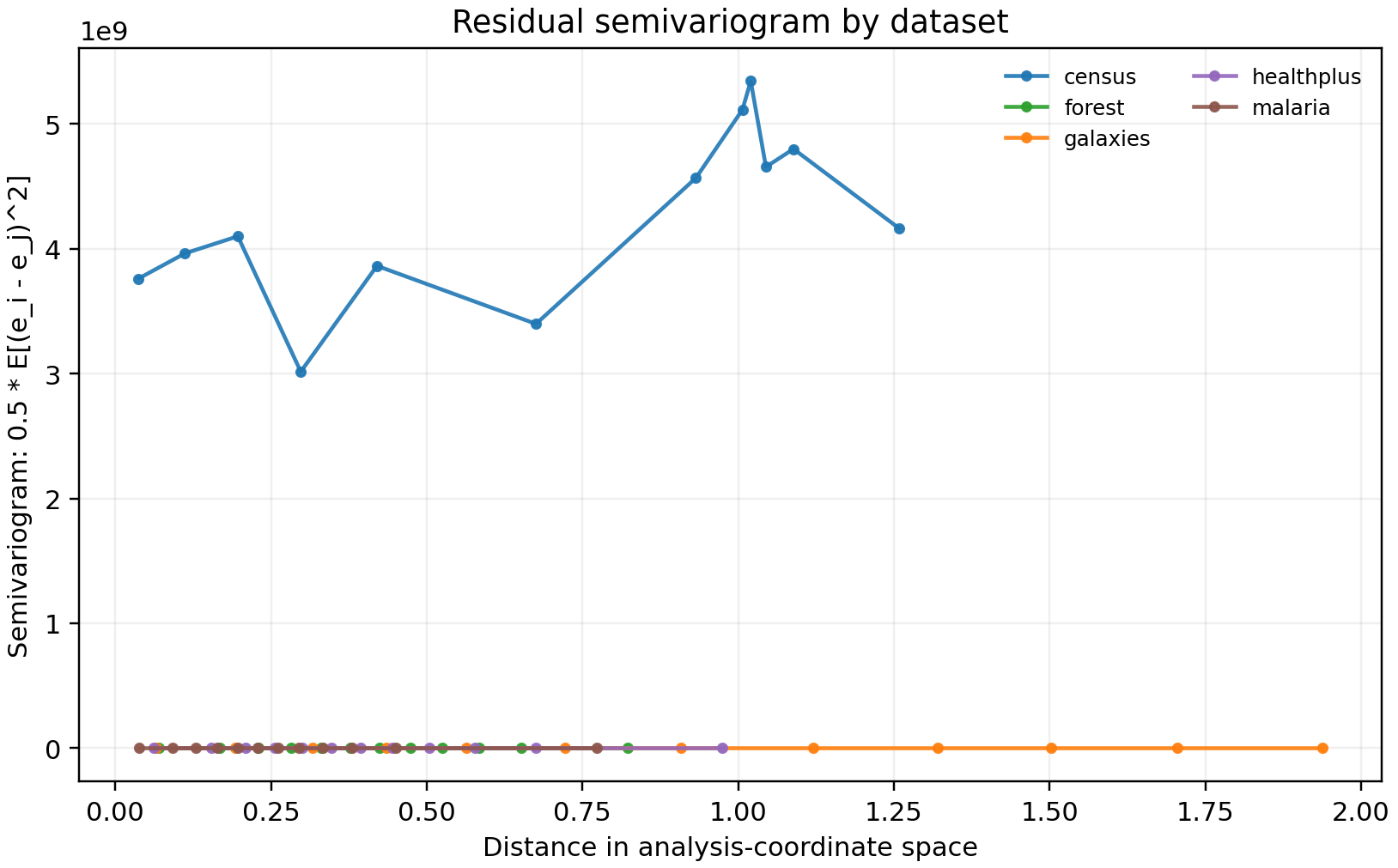}
\caption{Residual semivariograms for each empirical dataset under analysis coordinates. Rising semivariograms indicate stronger local dependence. Forest and malaria show the clearest distance-gradient pattern.}
\label{fig:proxysemi}
\end{figure}

For overlap, the design MAR floor is $0.10$ and the MCAR benchmark is $0.20$. Figure~\ref{fig:overlappi} reports the realized design and estimated propensity lower-tail quantiles under the exact sampling and MAR mechanism used in the main empirical analysis. The estimated propensities respect the declared overlap floor due to clipping, and MAR arms exhibit heavier lower-tail mass than MCAR arms. Figure~\ref{fig:overlapess} reports the corresponding inverse-propensity effective-sample-size ratios (ESS / labeled count), which quantify finite-sample information loss from nonuniform labeling probabilities.

\begin{figure}[!ht]
\centering
\includegraphics[width=\linewidth]{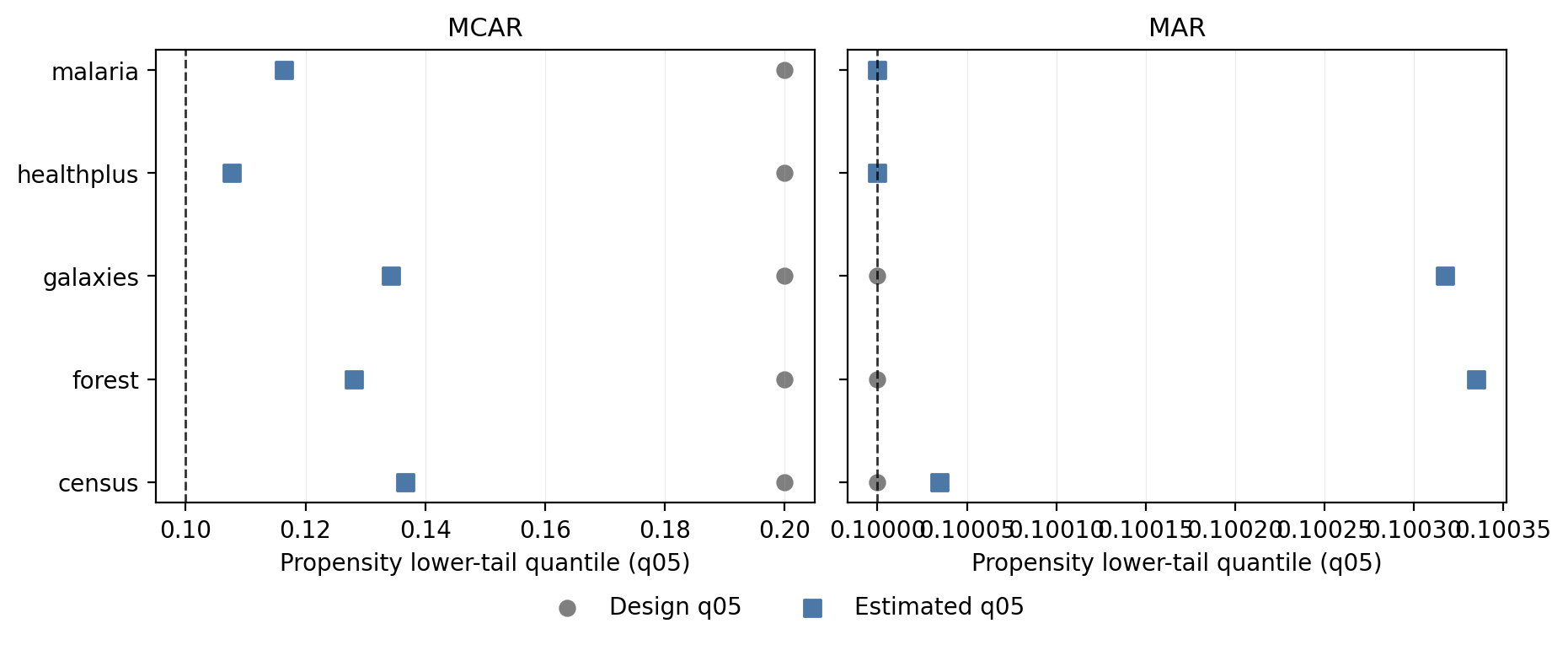}
\caption{Overlap diagnostics under the auxiliary-split observed-data design: design and estimated propensity lower-tail quantiles ($q05$) by dataset and arm, averaged over sampling regimes. The dashed line at $0.10$ marks the declared overlap floor.}
\label{fig:overlappi}
\end{figure}

\begin{figure}[!ht]
\centering
\includegraphics[width=\linewidth]{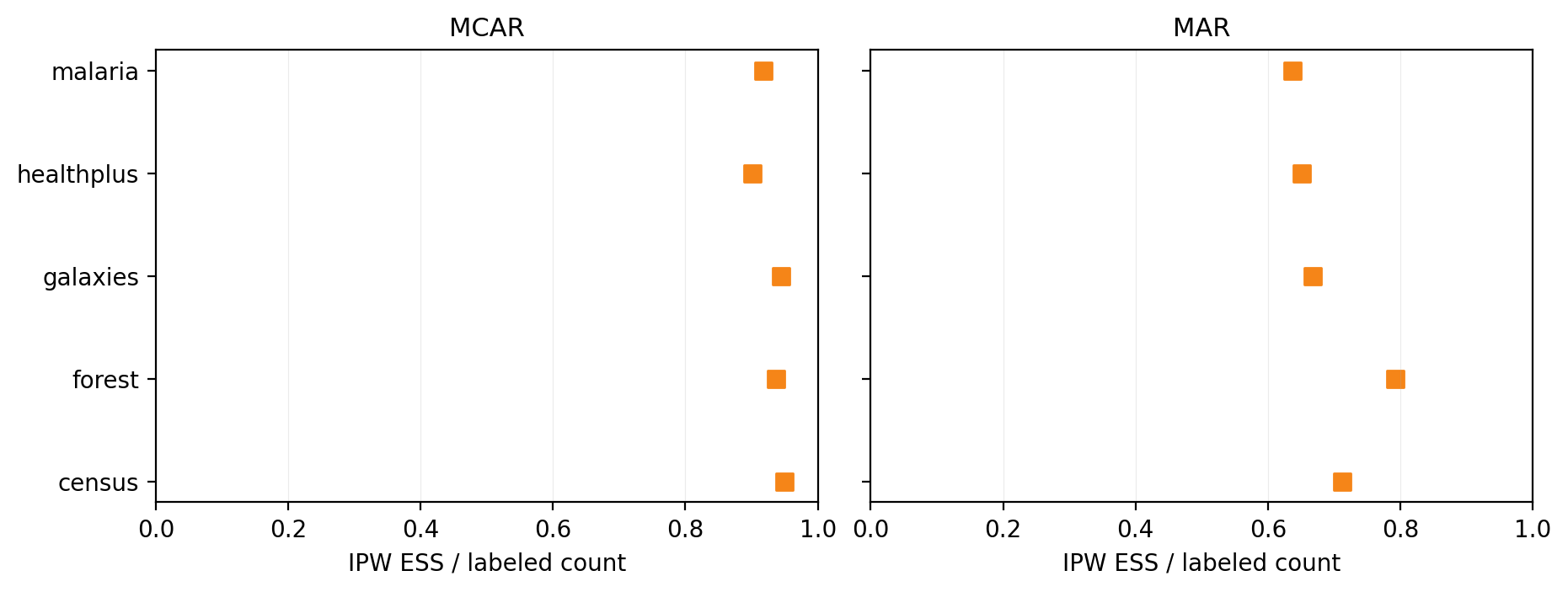}
\caption{Inverse-propensity effective-sample-size ratio (ESS / labeled count) under the auxiliary-split observed-data design, averaged over sampling regimes. MAR arms have lower ESS ratios, quantifying finite-sample information loss from nonuniform labeling probabilities.}
\label{fig:overlapess}
\end{figure}

\section{Mechanism Attribution}
\label{app:ablate_tune}
This appendix isolates which components are responsible for the calibration gains in the main experiments. We focus on the main synthetic design used for Figure~\ref{fig:synth_cov_box} and compare three nested DR variants that share the same AIPW point estimator but differ only in their covariance/critical-value construction: DR-iid, Spatial DR-HAC, and Spatial DR-JK-HAC. We also include the iid post-prediction baselines (Cross-PPI, PPI++, and Bootstrap-PPI) for reference. Because the point estimator is held fixed within the DR family, differences among these DR variants are directly attributable to dependence-aware variance estimation and to the jackknife-HAC correction that removes fold-shared nuisance noise.

Table~\ref{tab:ablation_hardcell} reports superpopulation coverage and mean interval length in the hardest cell of the main synthetic design ($\sigma=120$, soft-block sampling, $n=600$, 20\% labels). The incremental pattern is clear. Moving from iid post-prediction baselines to DR reduces MAR bias but does not, by itself, fix dependence-driven under-coverage (DR-iid). Adding a direct spatial HAC correction inflates variance and improves coverage (Spatial DR-HAC), and the jackknife-HAC correction restores near-nominal coverage in the most challenging soft-block MAR cell (Spatial DR-JK-HAC) at the cost of wider intervals. The same ordering is visible across the full dependence grid in Figure~\ref{fig:synthcov_allmethods}.

\begin{table}[!ht]
\centering
\caption{Mechanism attribution in the hardest synthetic cell ($\sigma=120$, soft-block sampling, 20\% labels, nominal 90\% superpopulation target). Each entry reports empirical coverage and mean interval length over 100 populations and 200 draws per population (20,000 intervals per arm).}
\label{tab:ablation_hardcell}
\small
\begin{tabular}{lcccc}
\toprule
Method & MCAR cov. & MCAR width & MAR cov. & MAR width \\
\midrule
Cross-PPI & 0.813 & 0.283 & 0.463 & 0.273 \\
PPI++ (iid) & 0.800 & 0.273 & 0.541 & 0.269 \\
Bootstrap-PPI (Efron 2025) & 0.652 & 0.215 & 0.345 & 0.288 \\
DR-iid & 0.805 & 0.274 & 0.823 & 0.315 \\
Spatial DR-HAC & 0.864 & 0.308 & 0.869 & 0.346 \\
Spatial DR-JK-HAC & 0.908 & 0.385 & 0.903 & 0.430 \\
\bottomrule
\end{tabular}
\end{table}

\section{Modular Covariance Extension Beyond Spatial HAC}
\label{app:modular}
To make this appendix self-contained, we state the exact simulation setup. Units $i=1,\dots,n$ are assigned two overlapping dependence labels $c_1(i)\in\{1,\dots,G_1\}$ and $c_2(i)\in\{1,\dots,G_2\}$. Outcomes follow
\[
Y_i=\mu+\beta X_i+U_{c_1(i)}+V_{c_2(i)}+\varepsilon_i,
\]
with $U_g$, $V_h$, and $\varepsilon_i$ mean-zero shocks. Labels are revealed under MCAR or MAR with overlap floors. We keep the same DR score as in the main text,
\[
\hat\theta=\frac{1}{n}\sum_{i=1}^n\left[\hat m_i+\frac{R_i}{\hat\pi_i}(Y_i-\hat m_i)\right].
\]
In this setup, $\hat m_i$ is the external prediction $ \hat Y_i $, and $\hat\pi_i$ is estimated by $K=5$ fold-wise cross-fitted logistic regression. This induces the same fold-level shared-nuisance artifact as in the spatial runs.

Let $\hat\psi_i$ denote estimated scores. For fold $k(i)$, define $\tilde\psi_i=\hat\psi_i-\bar\psi_{k(i)}$. Denote the Cameron--Gelbach--Miller two-way covariance by $\widehat V_{\text{2way}}(\cdot)$. The jackknife-corrected modular estimator is
\[
\widehat V_{\text{JK-2way}}
=
\left(\widehat V_{\text{2way}}(\tilde\psi)-\frac{1}{n^2}\sum_{i=1}^n\tilde\psi_i^2\right)
\;+\;
\frac{K}{K-1}\sum_{k=1}^K\left(\frac{n_k}{n}\right)^2\left(\bar\psi_k-\hat\theta\right)^2.
\]
The first bracket isolates off-diagonal dependence after removing fold-shared nuisance noise; the second term restores between-fold macro variation. Confidence intervals are $\hat\theta\pm z_{1-\alpha/2}\sqrt{\widehat V_{\text{JK-2way}}}$.

The same construction maps naturally to network settings when analysis units have two overlapping incidence dimensions. For dyadic outcomes, each edge observation belongs to a sender cluster and a receiver cluster; for panel-network settings, observations can belong to node and time clusters. In each case the DR score remains unchanged and only $\hat V$ is swapped to the relevant multiway form. We therefore present this section as a modularity demonstration, not a tuned recommendation.

\paragraph{Algorithm H.1 (Jackknife Two-Way Covariance).}
For completeness, we state the exact estimator used in this appendix as an executable sequence.
\begin{enumerate}
\item Compute cross-fitted $\hat\pi_i$ over $K$ folds, clip to $[\underline\pi,1-\underline\pi]$, and set $\hat m_i=\hat Y_i$.
\item Build scores $\hat\psi_i=\hat m_i+\frac{R_i}{\hat\pi_i}(Y_i-\hat m_i)$, $\hat\theta=n^{-1}\sum_i\hat\psi_i$, and fold means $\bar\psi_k$.
\item Demean within fold: $\tilde\psi_i=\hat\psi_i-\bar\psi_{k(i)}$.
\item Compute $\widehat V_{\text{2way}}(\tilde\psi)=\widehat V_{g_1}(\tilde\psi)+\widehat V_{g_2}(\tilde\psi)-\widehat V_{g_1,g_2}(\tilde\psi)$ using Cameron--Gelbach--Miller inclusion-exclusion sums.
\item Subtract diagonal piece $n^{-2}\sum_i\tilde\psi_i^2$.
\item Add between-fold ANOVA term $\frac{K}{K-1}\sum_k (n_k/n)^2(\bar\psi_k-\hat\theta)^2$.
\item Report $\hat\theta\pm z_{1-\alpha/2}\sqrt{\widehat V_{\text{JK-2way}}}$.
\end{enumerate}
To make the appendix reproducible at implementation level, we include a fully annotated reference implementation in the supplementary code bundle matching Algorithm H.1.

Figure~\ref{fig:modcov90} reports nominal $90\%$ coverage as dependence strength increases. At the weakest dependence level ($\texttt{dep\_scale}=0$), MAR coverage is $0.533$ for Cross-PPI, $0.952$ for the two-way plug-in DR, and $0.853$ for the jackknife-corrected two-way DR. Thus the jackknife correction removes much of the weak-dependence over-coverage from the plug-in estimator while keeping interval width essentially unchanged ($0.143$ vs.\ $0.141$). As dependence strengthens, both two-way DR variants become very conservative, with near-identical width profiles (Figure~\ref{fig:modwidth90}), indicating that in this design the remaining inflation is dominated by genuine overlap dependence rather than fold-sharing noise. Full $80\%$ and $95\%$ versions are shown in Figure~\ref{fig:modcov80} and Figure~\ref{fig:modcov95}.

\begin{figure}[!ht]
\centering
\includegraphics[width=\linewidth]{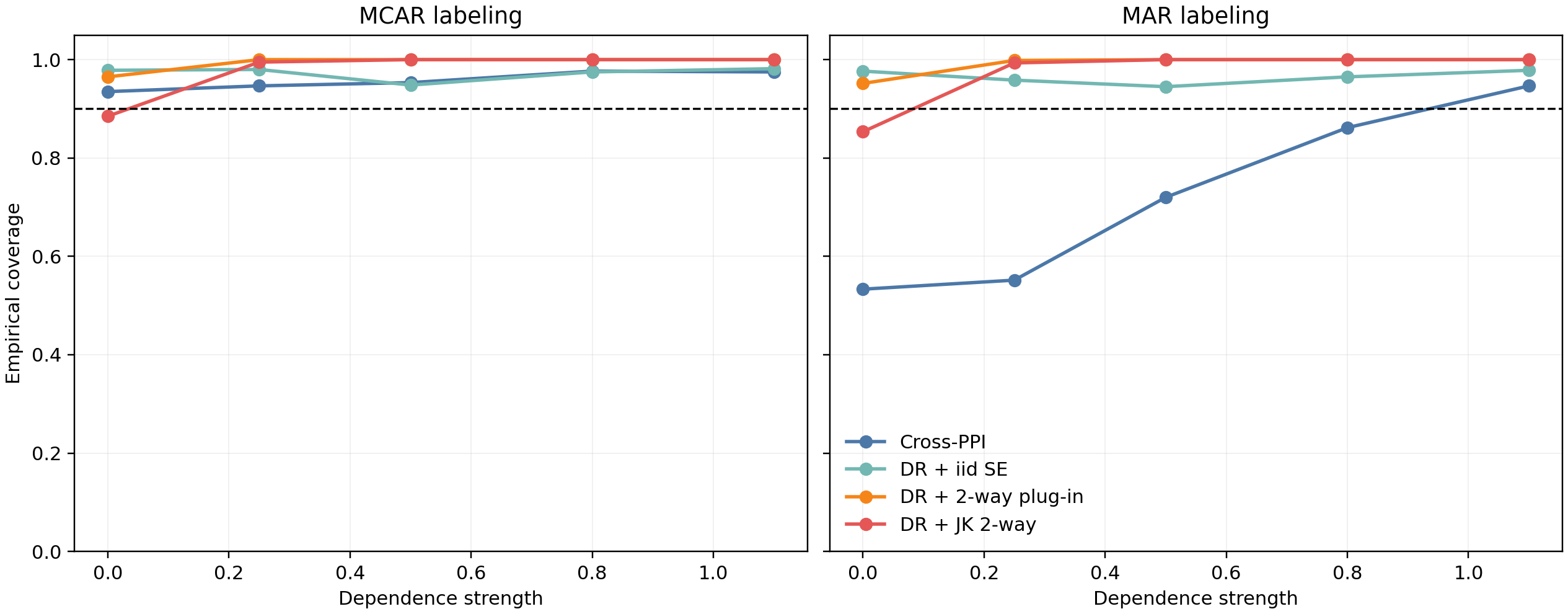}
\caption{Modular extension simulation with overlapping two-way dependence at nominal $90\%$. Cross-PPI remains anti-conservative in MAR. The jackknife two-way DR interval is less conservative than the direct two-way plug-in at weak dependence, while both methods are highly conservative in strong-overlap regimes.}
\label{fig:modcov90}
\end{figure}

\begin{figure}[!ht]
\centering
\includegraphics[width=\linewidth]{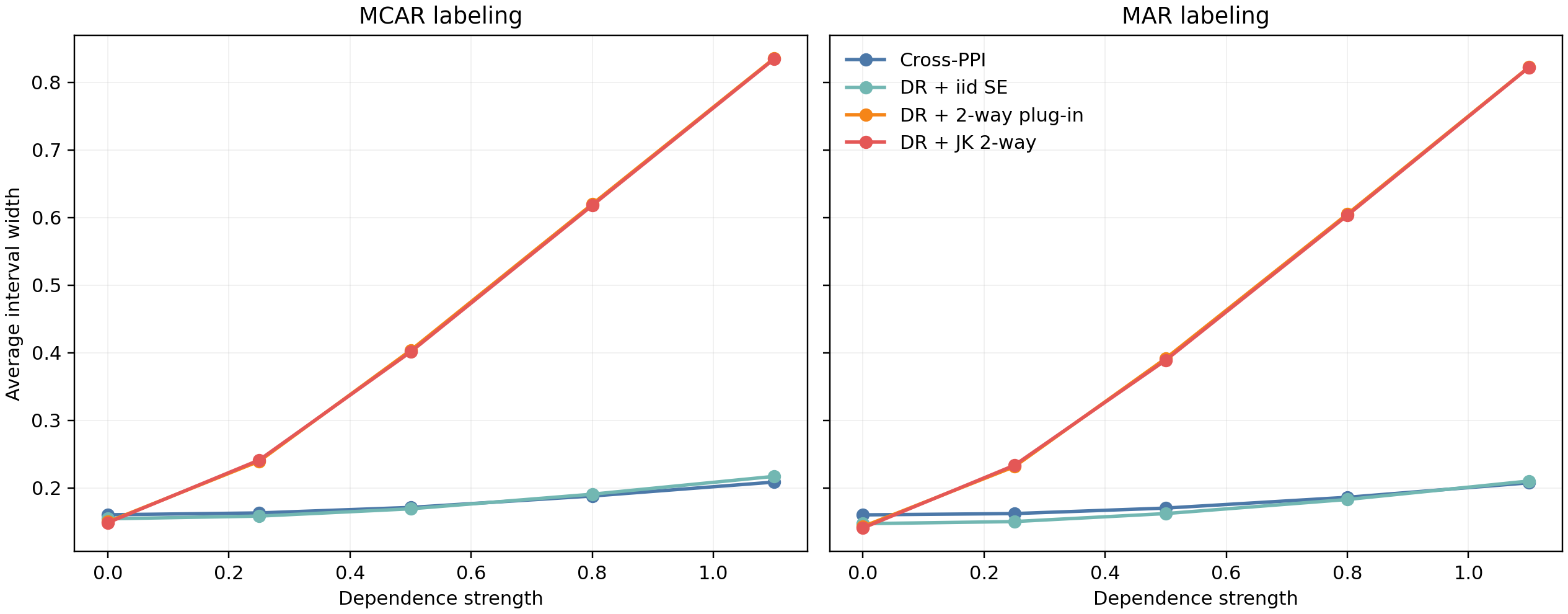}
\caption{Interval width in the modular extension simulation at nominal $90\%$. Jackknife and direct two-way plug-in widths are nearly identical throughout, and both increase sharply as overlap dependence strengthens.}
\label{fig:modwidth90}
\end{figure}

\begin{figure}[!ht]
\centering
\includegraphics[width=\linewidth]{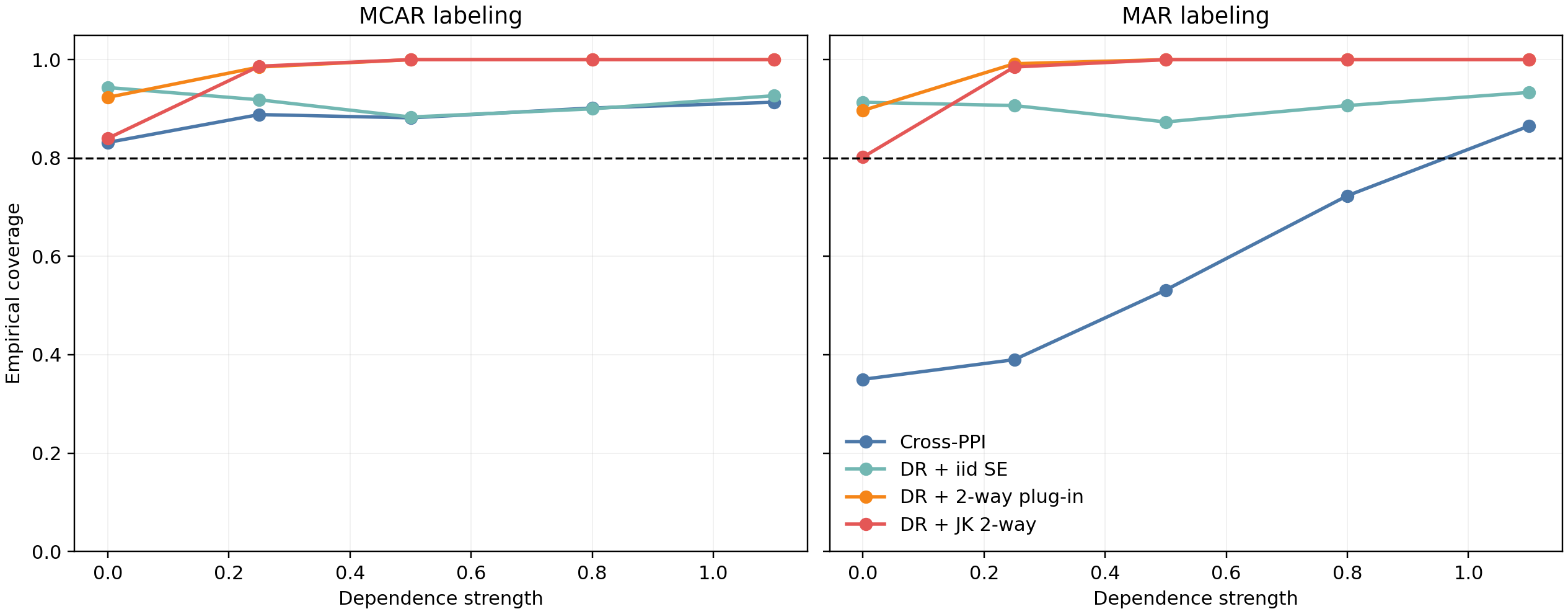}
\caption{Modular extension coverage at nominal $80\%$.}
\label{fig:modcov80}
\end{figure}

\begin{figure}[!ht]
\centering
\includegraphics[width=\linewidth]{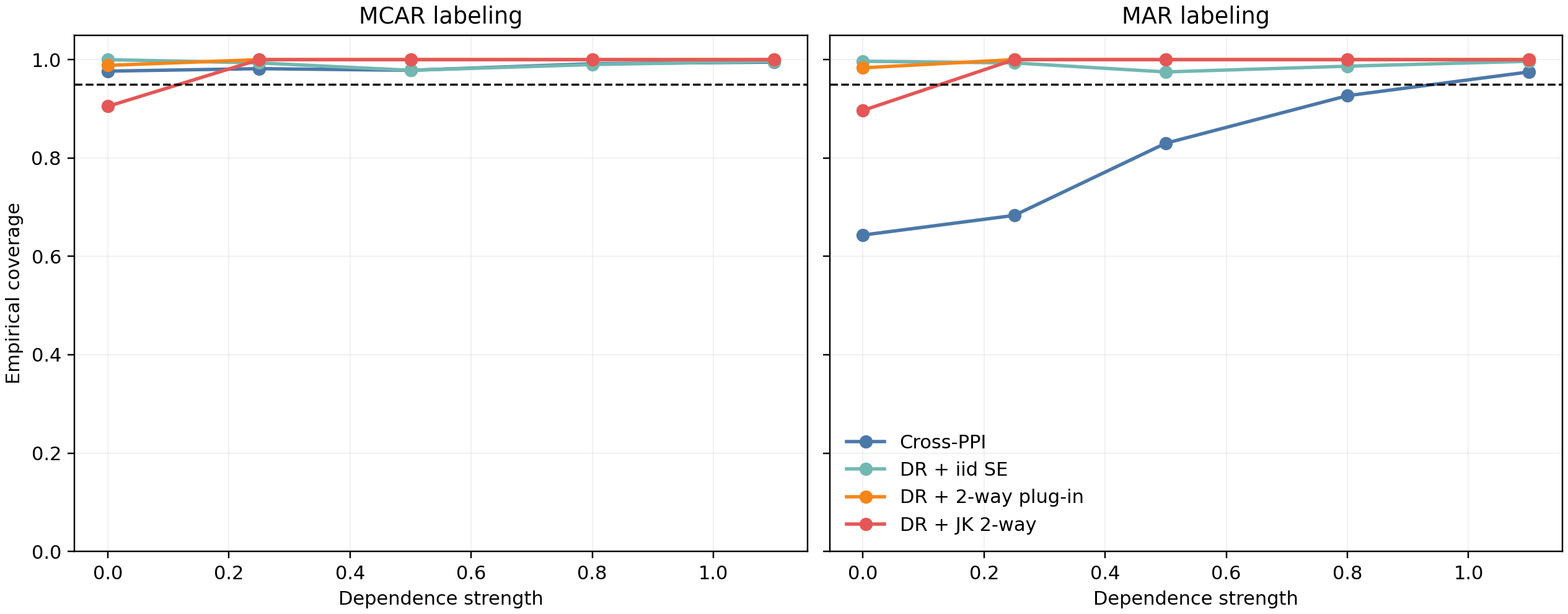}
\caption{Modular extension coverage at nominal $95\%$.}
\label{fig:modcov95}
\end{figure}

\section{Complete Annotated Proof of Theorem~\ref{thm:main}}
\label{app:proof}
This section gives a line-by-line derivation of Theorem~\ref{thm:main} and explicitly marks where each assumption is used.

\paragraph{Notation and decomposition.}
Define
\[
\phi_{0i}=\psi_i(\theta_0;m_0,\pi_0),\qquad
\hat\phi_i=\psi_i(\theta_0;\hat m,\hat\pi).
\]
Write $m_{0i}=m_0(W_i,s_i)$ and $\pi_{0i}=\pi_0(W_i,s_i)$ for unit-level oracle nuisances.
Because
\[
\hat\theta=\frac{1}{n}\sum_{i=1}^n\Big[\psi_i(\theta_0;\hat m,\hat\pi)+\theta_0\Big],
\]
we have
\[
\hat\theta-\theta_0=\frac{1}{n}\sum_{i=1}^n \hat\phi_i.
\]
Add and subtract $\phi_{0i}$:
\[
\sqrt{n}(\hat\theta-\theta_0)
=
\frac{1}{\sqrt{n}}\sum_{i=1}^n \phi_{0i}
\;+\;
\sqrt{n}R_n,
\quad
R_n=\frac{1}{n}\sum_{i=1}^n(\hat\phi_i-\phi_{0i}).
\]
So the theorem reduces to two tasks: show asymptotic normality of the oracle-score sum and show $\sqrt{n}R_n=o_p(1)$.

\paragraph{Step 1: Mean-zero oracle score (identification step).}
By Proposition~\ref{prop:dr} under Assumption~\ref{ass:mar},
\[
\E[\phi_{0i}]=0\quad\text{for each }i.
\]
Intuition: this is where MAR and overlap identify the target. Without this step, the leading term would be centered at a nonzero bias and no covariance correction could fix coverage.

\paragraph{Step 2: Centered triangular-array CLT term.}
Set
\[
Z_{n,i}=\phi_{0i}-\E[\phi_{0i}].
\]
From Step 1, $Z_{n,i}=\phi_{0i}$. Assumption~\ref{ass:clt} implies
\[
\frac{1}{\sqrt{n}}\sum_{i=1}^n Z_{n,i}\od \mathcal{N}(0,\sigma^2),
\qquad \sigma^2\in(0,\infty).
\]
Hence
\[
\frac{1}{\sqrt{n}}\sum_{i=1}^n \phi_{0i}\od \mathcal{N}(0,\sigma^2).
\]
Intuition: this is the dependence part of the theorem. Assumption~\ref{ass:clt} replaces iid CLT arguments with covariance-array controls tailored to local dependence.

\paragraph{Step 3: Nuisance remainder control.}
Assumption~\ref{ass:nuisance} states
\[
R_n=o_p(n^{-1/2}),
\]
so directly
\[
\sqrt{n}R_n=o_p(1).
\]
To make the DR structure explicit, expand
\[
R_n
=
\frac{1}{n}\sum_{i=1}^n
\Big(1-\frac{R_i}{\hat\pi_i}\Big)(\hat m_i-m_{0i})
\;+\;
\frac{1}{n}\sum_{i=1}^n
R_i(Y_i-m_{0i})\Big(\frac{1}{\hat\pi_i}-\frac{1}{\pi_{0i}}\Big).
\]
Under cross-fitting and overlap clipping, this remainder is small if either nuisance is first-order correct (propensity-consistent with square-integrable $\hat m$, or outcome-regression-consistent with clipped $\hat\pi$); the product-rate condition is the general misspecification-robust alternative \citep{bang2005doubly,chernozhukov2018dml}. This is the formal sense in which the asymptotic argument is doubly robust.
Intuition: orthogonality plus cross-fitting makes first-order nuisance error cancel, leaving only a second-order remainder. Buffered splitting is used so this cancellation is credible in dependent data, not only in iid designs.

\paragraph{Step 4: Combine Steps 2 and 3 (first theorem claim).}
From the decomposition,
\[
\sqrt{n}(\hat\theta-\theta_0)
\;=\;
\underbrace{\frac{1}{\sqrt{n}}\sum_{i=1}^n \phi_{0i}}_{\od\mathcal{N}(0,\sigma^2)}
\;+\;
\underbrace{\sqrt{n}R_n}_{o_p(1)}.
\]
Slutsky's theorem gives
\[
\sqrt{n}(\hat\theta-\theta_0)\od \mathcal{N}(0,\sigma^2).
\]
This proves asymptotic normality.

\paragraph{Step 5: Studentization (second theorem claim).}
Define
\[
T_n=\frac{\hat\theta-\theta_0}{\sqrt{\hat V_{\mathrm{JK}}^{+}}}.
\]
Assumption~\ref{ass:hac} gives $n\hat V_{\mathrm{JK}}^{+}\op\sigma^2$, hence $\sqrt{n\hat V_{\mathrm{JK}}^{+}}\op\sigma$. Combine with Step 4:
\[
T_n
=
\frac{\sqrt{n}(\hat\theta-\theta_0)}{\sqrt{n\hat V_{\mathrm{JK}}^{+}}}
\od \mathcal{N}(0,1).
\]
Therefore, for fixed $\alpha\in(0,1)$,
\[
\Pr\!\left(
\theta_0\in
\left[
\hat\theta-z_{1-\alpha/2}\sqrt{\hat V_{\mathrm{JK}}^{+}}, 
\hat\theta+z_{1-\alpha/2}\sqrt{\hat V_{\mathrm{JK}}^{+}}
\right]
\right)\to 1-\alpha.
\]
This proves asymptotic validity of the reported confidence interval.

\paragraph{Assumption-to-step map (sanity check).}
Assumption~\ref{ass:mar} enters only through Proposition~\ref{prop:dr} (Step 1), which fixes the estimand and removes first-order selection bias. Assumption~\ref{ass:clt} controls the dependence structure of the oracle score sum (Step 2). Assumption~\ref{ass:pred} governs whether prediction scores are externally valid or must be cross-fitted in-sample. Assumption~\ref{ass:nuisance} controls estimation error from $(\hat m,\hat\pi)$ (Step 3). Assumption~\ref{ass:hac} links the estimated covariance to the same limiting variance used in Step 2 (Step 5). If any one of these five links fails, nominal coverage is not guaranteed; this is exactly why the supplement reports out-of-scope stress tests.

\end{document}